\pdfoutput=1
\documentclass[11pt]{article}

\usepackage[margin=1in]{geometry}

\usepackage[utf8]{inputenc}
\usepackage[T1]{fontenc}
\usepackage{microtype}

\usepackage{amsmath}
\usepackage{amssymb}
\usepackage{amsfonts}
\usepackage{mathtools}
\usepackage{amsthm}
\usepackage{authblk}

\usepackage{graphicx}
\usepackage{booktabs}
\usepackage{multirow}
\usepackage{pifont}
\usepackage{makecell}
\usepackage[table]{xcolor}
\usepackage{wrapfig}
\usepackage{enumitem}
\usepackage{subcaption}
\usepackage{algorithm}
\usepackage{algpseudocode}
\usepackage{nicefrac}
\usepackage{url}
\usepackage[numbers,sort&compress]{natbib}
\usepackage[hidelinks]{hyperref}
\usepackage[capitalize,noabbrev]{cleveref}

\newcommand{\cmark}{\ding{51}}
\newcommand{\xmark}{\ding{55}}
\newcommand{\sysname}{MIITA}

\theoremstyle{plain}
\newtheorem{theorem}{Theorem}[section]
\newtheorem{proposition}[theorem]{Proposition}

\theoremstyle{definition}

\theoremstyle{remark}

\newtheorem*{informalthm}{Theorem (Directional coverage bound, informal)}

\title{\sysname{}: Memory-Induced Inference-Time Adaptation for Continual Learning with Small Language Models}

\author[1]{Dong Li}
\author[2]{Yanchi Liu}
\author[2]{Xujiang Zhao}
\author[2]{Wei Cheng}
\author[2]{Zhengzhang Chen}
\author[3]{Xintao Wu}
\author[4]{Zhong Chen}
\author[1]{Chen Zhao}
\author[2]{Haifeng Chen}

\affil[1]{
Baylor University\\
\texttt{\{dong\_li1, chen\_zhao\}@baylor.edu}
}

\affil[2]{
NEC Laboratories America\\
\texttt{\{yanchi, xuzhao, weicheng, zchen, haifeng\}@nec-labs.com}
}

\affil[3]{
University of Arkansas\\
\texttt{xintaowu@uark.edu}
}

\affil[4]{
Southern Illinois University\\
\texttt{zhong.chen@cs.siu.edu}
}

\date{}

\begin{document}

\maketitle

\begin{abstract}
Continual learning (CL) is essential for small language models (SLMs) to adapt to evolving real-world needs in resource-constrained deployments. However, directly updating their limited parameter space causes catastrophic forgetting. While memory-based methods naturally address this by decoupling knowledge retention from parameters, existing approaches designed for large language models (LLMs) rely on abundant storage and strong in-context reasoning that SLMs lack. To address these challenges, we propose \sysname{}, a Memory-Induced Inference-Time Adaptation framework for supervised CL under constrained storage. \sysname{} stores supervised experiences as compact correction-direction prototypes with semantic anchors, and retrieves them at inference time using semantic and uncertainty-based cues. 
The retrieved directions are applied through gated temporary hidden-state adaptation, enabling non-destructive reuse of past supervision without backbone updates, prompt extensions, or test-time backpropagation. A local theoretical analysis links this design to first-order loss reduction, uncertainty-guided retrieval, and directional coverage for retaining old-stage knowledge. 
Extensive experiments across diverse supervised CL settings show that \sysname{} consistently improves final performance and mitigates forgetting under fixed memory budgets.
\end{abstract}

\section{Introduction}
\label{sec:introduction}

Small language models (SLMs) are increasingly being deployed in localized, low-latency, privacy-sensitive, and resource-constrained settings, where they need to continually adapt to evolving real-world needs~\cite{lu2024small,van2025survey}. However, continual learning (CL) for SLMs is fundamentally difficult to achieve through the parameter space alone. Their limited capacity and high representational redundancy mean that repeatedly writing new experiences into the backbone induces severe catastrophic forgetting and unstable updates~\cite{DKVB,DA-GRPO,POS}. 
Therefore, the key challenge is to continuously utilize new supervised experiences without overwriting the scarce parameter space.

Memory-based CL naturally addresses this by decoupling experience retention from the model's parameter space, allowing the backbone to remain stable while auxiliary storage accumulates new knowledge~\cite{shi2025continual,mok2023large,gutierrez2024hipporag}. However, existing LLM-centric memory methods translate poorly to SLMs because they typically preserve past experience as textual examples, summaries, or demonstrations~\cite{CIS,das2024larimar}. Such memories are storage-inefficient under a fixed budget: they retain surface content in addition to the reusable correction signal, and functionally redundant experiences are difficult to consolidate in text form~\cite{luo2025empirical}. Moreover, they rely on strong in-context reasoning to infer how retrieved memories should affect predictions, which is constrained in smaller models~\cite{wang2025comprehensive}. As shown in Figure~\ref{fig:intro}, the performance of representative memory-based LLM CL methods degrades significantly as the base model size decreases, especially under fixed storage budgets. This trend highlights a critical dual bottleneck for SLMs: the inability to maintain massive textual memories under tight storage limits, and the limited capacity to effectively exploit them during inference.

\begin{wrapfigure}{R}{0.6\textwidth}
    \centering
    \setlength{\abovecaptionskip}{0pt}
    \setlength{\belowcaptionskip}{-2pt}
    \includegraphics[width=\linewidth]{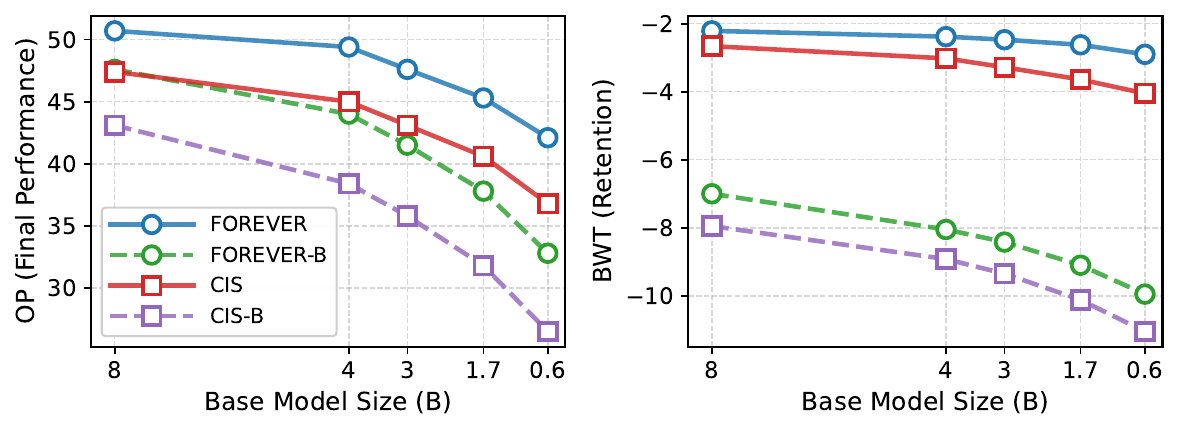}
    \caption{
Model-size scaling of CIS~\cite{CIS} and FOREVER~\cite{FOREVER}, two LLM-oriented memory-based CL baselines, on \texttt{SuperNI}~\cite{SuperNI} benchmark. The ``-B'' suffix indicates variants restricted to a fixed memory budget, contrasting with their unconstrained counterparts. We evaluate Overall Performance (OP) and Backward Transfer (BWT); higher values indicate better accuracy and less forgetting, respectively.
}
    \label{fig:intro}
\end{wrapfigure}
Overcoming this dual bottleneck requires rethinking what a memory item should preserve. Inspired by von Oswald \textit{et al.}~\cite{ICL-as-Opt}, who characterize in-context examples as inducing gradient-descent-like local corrections during the forward pass, the value of a memory item lies not in its raw semantic content alone, but in the functional correction it imposes on the model's hidden representation. This perspective reveals a \textit{divergence between semantic and functional redundancy}: semantically disparate examples may induce identical corrections and can thus be merged, whereas semantically similar examples might require distinct corrections. Consequently, rather than preserving historical text and expecting the SLM to infer how to use it, an alternative paradigm is to organize memory directly around reusable adaptation directions. Storing these functional signals in a compact form addresses both tight storage limits and the reliance on in-context reasoning.

Motivated by this paradigm, we propose \textbf{\sysname{}}, a \textbf{Memory-Induced Inference-Time Adaptation} framework for supervised CL with SLMs under constrained persistent storage. \sysname{} converts each supervised experience into a compact correction direction within the model's internal hidden space, consolidating these functional signals into budget-aware memory prototypes that utilize semantic anchors purely as lightweight retrieval keys. At inference time, the framework retrieves relevant historical directions through both semantic relevance and an uncertainty-based directional proxy. These directions are aggregated into a query-specific correction and applied via a gated, temporary hidden-state update to directly guide the prediction. This adaptation is disposable, requiring no permanent parameter updates, test-time backpropagation, or lengthy prompt extensions.
Our main contributions are summarized as follows:
\begin{itemize}[leftmargin=*, itemsep=0pt, topsep=0pt]
    \item We propose a direction-oriented memory representation that stores supervised experiences as correction-direction prototypes with lightweight semantic anchors, overcoming the high storage and reasoning costs of traditional textual memory in SLMs.

    \item We introduce a memory-induced inference-time adaptation mechanism that applies retrieved historical directions via gated temporary hidden-state updates, safely reusing past supervision without test-time backpropagation or destructive backbone updates.

    \item We provide a local theoretical analysis showing that correction directions are first-order optimal for decreasing the supervised loss, that uncertainty directions identify the steepest direction for decreasing local predictive entropy and serve as label-free retrieval cues, and that old-stage retention is controlled by the directional coverage of stored memory.

    \item We provide comprehensive empirical evaluation across diverse CL settings, showing that \sysname{} improves final performance and reduces forgetting under fixed memory budgets, with further analyses demonstrating its robustness across model scales and memory budgets, as well as the effectiveness of functional correction memory and uncertainty-guided retrieval..
\end{itemize}

\section{Related Work}
\label{sec:related}
\textbf{CL for SLMs.} CL for SLMs is crucial for resource-constrained deployment but remains underexplored. Existing methods address catastrophic forgetting through various mechanisms: DKVB~\cite{DKVB} uses localized updates via a discrete key-value bottleneck; DA-GRPO~\cite{DA-GRPO} optimizes continual post-training while learning to invoke a cloud LLM under a cloud-assistance budget; and Srinath K \textit{et al.}~\cite{POS} use POS-guided code-switching to mitigate cross-lingual forgetting through replay adapters. However, these approaches still rely on training-time parameter updates, cloud assistance, or sample replay. 
In contrast, \sysname{} addresses these limitations by bypassing persistent backbone modifications and training-time replay, thereby overcoming the inherent unreliability of SLMs in continual fine-tuning and text-based prompting.

\textbf{Memory-Based CL for LLMs.} Existing memory-based CL methods for LLMs generally fall into two categories based on when memory is used. Training-time replay methods reuse stored examples during optimization; for example, Continual-T0 (CT0)~\cite{CT0} employs a small rehearsal buffer to retain previous abilities, while FOREVER~\cite{FOREVER} schedules memory reviews based on update dynamics. Alternatively, inference-time methods like CIS~\cite{CIS} and InCA~\cite{InCA} utilize textual summaries or in-context examples to avoid forgetting without updating the LLM. Other inference-time methods (e.g., MAC~\cite{MAC}, CMT~\cite{CMT}, HippoRAG~2~\cite{HippoRAG2}, MBC~\cite{MBC}) retrieve external documents for continual knowledge updates. However, these methods primarily focus on knowledge augmentation rather than directly addressing task-level forgetting in supervised CL, which differs from our target problem. Furthermore, the remaining LLM-centric approaches are ill-suited for SLMs, where limited capacity and weaker in-context abilities make replay and textual memories unreliable. In contrast, \sysname{} stores supervised experiences as compact functional correction directions, invoking them via lightweight, optimization-free hidden-state adaptation at inference time.

\section{Methodology}
\label{sec:method}
\subsection{Problem Setting}

We study \textbf{supervised continual learning (CL) for small language models (SLMs) under a constrained persistent auxiliary storage budget}. 
An SLM receives a sequence of data stages 
$\mathcal{D}_{\mathrm{stream}}=\{\mathcal{D}_1,\ldots,\mathcal{D}_T\}$, where each stage $\mathcal{D}_t=\{(x_i^t,y_i^t)\}_{i=1}^{N_t}$ contains supervised input-output examples. 
After each stage, a CL algorithm updates its system state, which may include model parameters and/or auxiliary persistent information such as replay samples, summaries, prompts, adapters, statistics, or memory items. 
To reflect the resource constraints of SLM deployment, the total size of this auxiliary persistent information is bounded by a fixed storage budget $C_{\mathrm{mem}}$. 
The goal is to learn from each new stage while preserving performance on all previously seen stages, i.e., maximizing final average performance and minimizing forgetting.

\subsection{\sysname{}: Memory-Induced Inference-Time Adaptation}
\label{sec:miita_overview}

To address budget-constrained supervised CL for SLMs, we introduce \textbf{\sysname{}}, a memory-induced inference-time adaptation framework that decouples long-term experience retention from the frozen model parameter space by preserving historical supervision as compact correction directions rather than raw examples or textual summaries.
Tailored to SLMs' limited capacity and weaker in-context abilities,
\sysname{} models continual memory as reusable adaptation signals: during CL, supervised experiences are converted into pre-LM-head correction directions, which the memory manager then organizes into budget-aware prototypes with semantic keys for retrieval. 
At inference, \sysname{} retrieves relevant prototypes using semantic and uncertainty cues, applying them via a gated temporary hidden update.
We provide local theoretical analyses for correction-direction extraction, uncertainty-based retrieval, temporary hidden adaptation, and direction-memory coverage; formal statements and proofs are given in Appendix~\ref{app:theory}.
The overall framework of \sysname{} is shown in Algorithm~\ref{alg:miita} in Appendix~\ref{app:miita_algorithm}


\subsubsection{Direction-Oriented Memory Representation}
\label{sec:memory_unit}

\sysname{} maintains a memory bank $\mathcal B=\{u_1,\ldots,u_J\}$, where each unit is a direction prototype:
\[
u_j=(d_j,\mathcal A_j,n_j).
\]
This design reflects the need to separate functional retention from semantic access in SLMs, which cannot reliably depend on large textual memories due to tighter context budgets and weaker in-context abilities. 
The prototype direction $d_j\in\mathbb R^{H}$ is a normalized correction direction in the pre-LM-head hidden space and serves as the primary stored signal, preserving the functional adaptation effect of historical supervised experiences. 
The anchor set $\mathcal A_j$ contains lightweight semantic keys that are used only to retrieve this functional signal from different query contexts, and $n_j$ records how many experiences have been merged into the prototype. 
The memory bank $\mathcal B$ is maintained under the global storage budget $C_{\mathrm{mem}}$, which is enforced during memory writing through memory repair when new directions or anchors exceed the budget, as detailed in Section~\ref{sec:memory_writing}.

\subsubsection{Extracting Direction Signal from an Experience}
\label{sec:direction_extraction}

Given a supervised experience $(x_m,y_m)$, \sysname{} extracts two signals: a semantic key $k_m$ for memory access and a correction direction $d_m$ for memory writing.

\textbf{Semantic key.}
Let $\mathcal X_m$ be the input token positions and $h_r^{(\ell_k)}$ the hidden state at layer $\ell_k$. We define
\begin{equation}
k_m=
\mathrm{Norm}
\left(
\frac{1}{|\mathcal X_m|}
\sum_{r\in\mathcal X_m}
h_r^{(\ell_k)}
\right).
\label{eq:semantic_key}
\end{equation}
where $\mathrm{Norm}(v)=v/(\|v\|_2+\epsilon)$. The key $k_m$ is used only for later retrieval.

\textbf{Correction vector.}
We define the correction signal in the pre-LM-head hidden space, i.e., the space of final hidden states before they are projected to vocabulary logits. 
For an experience $(x_m,y_m)$, its unnormalized correction vector $R_m$ is
\begin{equation}
R_m=
\frac{1}{|\mathcal Y_m|}
\sum_{t\in\mathcal Y_m}
\left(
-\frac{\partial \mathcal L_t}{\partial h_t^{\mathrm{out}}}
\right)
=
\frac{1}{|\mathcal Y_m|}
\sum_{t\in\mathcal Y_m}
W_{\mathrm{LM}}^\top(e_{y_t}-p_t).
\label{eq:correction_direction}
\end{equation}
Here, $\mathcal Y_m$ denotes the target-output token positions, $h_t^{\mathrm{out}}\in\mathbb R^{H}$ is the final hidden state before the LM head at position $t$, $W_{\mathrm{LM}}$ is the LM-head projection matrix, $p_t=\mathrm{softmax}(W_{\mathrm{LM}}h_t^{\mathrm{out}})$ is the predicted token distribution, $y_t$ is the ground-truth token, $e_{y_t}$ is its one-hot vector, and $\mathcal L_t=-\log p_t[y_t]$ is the token-level cross-entropy loss. 
The vector $R_m$ represents the supervised correction signal exposed by this experience: it specifies the local hidden-space direction that would increase the likelihood of the target output under the current model. 
Unlike a text summary or a semantic embedding, this signal captures the functional training value of the experience. 
We therefore store its normalized form $d_m=\mathrm{Norm}(R_m)$ as a reusable adaptation direction in memory.
This direction is first-order optimal for locally decreasing the supervised loss of the experience in the pre-LM-head hidden space (Appendix~\ref{app:theory}, Proposition~\ref{prop:correction_optimality}).

\subsubsection{Budget-Aware Memory Management}
\label{sec:memory_writing}

Given the semantic key $k_m$ and correction direction $d_m$ extracted from a new experience, \sysname{} writes it into the memory bank by first updating the direction prototypes and then repairing the memory if the storage budget is exceeded.

\textbf{Prototype assignment and update.}
If $|\mathcal{B}|=0$, \sysname{} directly creates a new prototype $u_{\mathrm{new}}=(d_m,\{k_m\},1)$. 
Otherwise, it finds the closest existing prototype in the correction-direction space:
\[
j^\star=\arg\max_{1\le j\le J}\cos(d_m,d_j).
\]
Let $\tau_d$ be the direction-merge threshold. 
If $\cos(d_m,d_{j^\star})<\tau_d$, the experience is not sufficiently aligned with any existing prototype, so it is treated as a new direction and stored as $u_{\mathrm{new}}=(d_m,\{k_m\},1)$. 
Otherwise, it is merged into $u_{j^\star}$ by updating the prototype direction with a running average:
\begin{equation}
d_{j^\star}\leftarrow
\mathrm{Norm}
\left(
n_{j^\star}d_{j^\star}+d_m
\right),
\qquad
n_{j^\star}\leftarrow n_{j^\star}+1.
\label{eq:prototype_update}
\end{equation}

After merging into $u_{j^\star}$, \sysname{} uses $k_m$ only to expand the semantic access points of the assigned prototype. 
With semantic-anchor novelty threshold $\tau_k$, we update
\begin{equation}
\mathcal A_{j^\star}\leftarrow \mathcal A_{j^\star}\cup\{k_m\}
\quad
\text{if }
\max_{a\in\mathcal A_{j^\star}}\cos(k_m,a)<\tau_k.
\label{eq:anchor_update}
\end{equation}
Otherwise, $k_m$ is discarded as a redundant anchor. 
This enables one correction direction to be retrieved from diverse contexts without storing redundant anchors or raw examples.

\textbf{Memory repair.}
When the memory footprint exceeds $C_{\mathrm{mem}}$, \sysname{} repairs memory in a direction-preserving order: it first prunes redundant semantic anchors, then merges the most similar prototypes with minimal distortion, and finally evicts the prototype with the lowest value:
\begin{equation}
\mathrm{value}(u_j)
=
\frac{
\log(1+n_j)
\left(
1-\max_{r\ne j}\cos(d_j,d_r)
\right)
}{
\mathrm{Size}(u_j)
}.
\label{eq:prototype_value}
\end{equation}
This value favors prototypes that are well supported, directionally distinctive, and storage-efficient. 
As formalized by the directional coverage view at the end of Section~\ref{sec:inference_adaptation}, the benefit of memory depends on how well stored directions cover historical corrections; thus, the limited budget is allocated to diverse reusable correction directions rather than raw examples.

\subsubsection{Inference-Time Retrieval and Adaptation}
\label{sec:inference_adaptation}

After continual training, given a query $q$ at inference time, \sysname{} first constructs query-side retrieval signals, then retrieves relevant historical correction directions, and finally applies the retrieved direction through temporary hidden-state adaptation to locally improve the current prediction without permanently modifying the model.

\textbf{Query-side retrieval signals.}
\sysname{} computes the query semantic key $k_q$ by applying the same pooling function in Eq.~\eqref{eq:semantic_key} to the query token positions. 
This key provides the semantic signal for accessing memory. 
Since the ground-truth output is unavailable at inference time, \sysname{} also constructs an unsupervised directional proxy from local prediction uncertainty. 
Let $h_q$ be the pre-LM-head hidden state at the answer-start or first probe decoding position, and let $p_q=\mathrm{softmax}(W_{\mathrm{LM}}h_q)$. 
We keep the top-$K_v$ token set $\mathcal V_q$ under $p_q$ and renormalize their probabilities as $\hat p_q(v)=p_q(v)/\sum_{v'\in\mathcal V_q}p_q(v')$ for $v\in\mathcal V_q$. 
The restricted entropy is then $\hat H_q=-\sum_{v\in\mathcal V_q}\hat p_q(v)\log \hat p_q(v)$.
The gradient of $\hat H_q$ with respect to the restricted logits is
\begin{equation}
g_q
=
-\hat p_q\odot
\left(
\log \hat p_q+\hat H_q\mathbf 1
\right).
\label{eq:g_p}
\end{equation}
Projecting the restricted entropy gradient back to the hidden space gives
\begin{equation}
\tilde d_q=\mathrm{Norm}(-r_q),
\qquad
\text{where }
r_q=W_{\mathcal V_q}^{\top}g_q.
\label{eq:uncertainty_direction}
\end{equation}
If the norm of $r_q$ exceeds a stability threshold $\epsilon_r$, \sysname{} uses $\tilde d_q$ as the entropy-decreasing query direction. 
This direction is the steepest first-order direction for decreasing the restricted predictive entropy $\hat H_q$ at the current query state (Appendix~\ref{app:theory}, Proposition~\ref{prop:uncertainty_proxy}), and therefore serves as a label-free geometric cue for retrieving historical directions that may affect the current prediction.

\textbf{Memory retrieval and direction aggregation.}
Given $k_q$ and, when available, $\tilde d_q$, \sysname{} retrieves memory prototypes from both semantic and directional channels. 
For each prototype $u_j$, its semantic relevance is measured by the best matching anchor, $S_k(q,j)=\max_{a\in\mathcal A_j}\cos(k_q,a)$. 
The candidate set is defined as
\begin{equation}
\mathcal C_q
=
\operatorname{Top}_{R_k}\!\left(S_k(q,j)\right)
\cup
\begin{cases}
\operatorname{Top}_{R_d}\!\left(\cos(\tilde d_q,d_j)\right), 
& \text{if } \tilde d_q \text{ is available},\\
\varnothing, 
& \text{otherwise}.
\end{cases}
\label{eq:candidate_set}
\end{equation}
Here, $\operatorname{Top}_{R_k}$ and $\operatorname{Top}_{R_d}$ return the prototype indices with the highest semantic and directional scores, respectively.
For each candidate $u_j$ with $j\in\mathcal C_q$, \sysname{} computes
\begin{equation}
\mathrm{score}(q,j)
=
\lambda_k S_k(q,j)
+
\lambda_u U(q,j),
\qquad
\text{where }U(q,j)=
\max
\left(
0,
-
g_q^\top
\epsilon W_{\mathcal V_q}d_j
\right).
\label{eq:retrieval_score}
\end{equation}
Here, $U(q,j)$ estimates whether moving along $d_j$ would locally reduce the query uncertainty, $\epsilon$ is a small probe scale, and $\lambda_k,\lambda_u\ge 0$ balance semantic relevance and directional utility. 
If the directional proxy is unavailable, we set $U(q,j)=0$.
The top-$M$ candidates under $\mathrm{score}(q,j)$ are selected as $S_q$ and aggregated into a query-specific correction direction $\bar d_q$:
\begin{equation}
\bar d_q=
\mathrm{Norm}
\left(
\sum_{j\in S_q}w_jd_j
\right),
\qquad
\text{where }w_j=
\frac{
\exp(\mathrm{score}(q,j)/T_{\mathrm{agg}})
}{
\sum_{r\in S_q}\exp(\mathrm{score}(q,r)/T_{\mathrm{agg}})
}.
\label{eq:direction_aggregation}
\end{equation}
Here, $T_{\mathrm{agg}}$ is the softmax temperature. 
The resulting $\bar d_q$ summarizes the historical correction directions retrieved for the current query.

\textbf{Temporary hidden adaptation.}
The aggregated direction $\bar d_q$ is applied during generation through a query-specific hidden update. 
Rather than applying the same correction uniformly to all decoding states, \sysname{} uses an activation gate to make the correction stronger when the current generation state is close to the query state that triggered retrieval:
\begin{equation}
p_t'=
\mathrm{softmax}(W_{\mathrm{LM}}(h_t^{\mathrm{out}}
+
\eta \alpha_t\bar d_q)),
\qquad
\text{where }\alpha_t=
\max(0,\cos(\mathrm{Norm}(h_q),\mathrm{Norm}(h_t^{\mathrm{out}}))).
\label{eq:hidden_adaptation}
\end{equation}
Here, $\eta$ is the adaptation scale. 
Both the direction aggregation in Eq.~\eqref{eq:direction_aggregation} and the activation gate in Eq.~\eqref{eq:hidden_adaptation} admit a local least-squares interpretation (Appendix~\ref{app:theory}, Proposition~\ref{prop:hidden_adapter_lsq}). 
An ablation in Appendix~\ref{app:gate_ablation} further confirms that this activation gate is necessary, as uniformly applying the retrieved direction to all decoding states leads to worse performance and stronger forgetting.
Thus, the inference-time update can be viewed as a low-cost rank-one correction in hidden space, modulated by local state similarity.

\begin{informalthm}
Suppose the supervised loss is $\beta$-smooth in the pre-LM-head hidden space. 
For an old-stage example $z$, applying inference-time hidden adaptation with retrieved direction $\bar d_z$ reduces the old-stage risk to first order by a term proportional to
\[
\eta \|G_z^\alpha\|_2 \langle b_z,\bar d_z\rangle - \frac{\beta\eta^2}{2},
\]
where $G_z^\alpha$ is the gated old-task correction vector and $b_z=G_z^\alpha/\|G_z^\alpha\|_2$. 
Thus, the benefit of memory is controlled by its \emph{directional coverage} of historical corrections, rather than by raw example coverage. 
The full statement and proof are given in Appendix~\ref{app:theory}, Theorem~\ref{thm:directional_coverage}.
\end{informalthm}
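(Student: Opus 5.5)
The plan is to apply the descent lemma for $\beta$-smooth functions to the loss of the old-stage example $z$, viewed as a function of the pre-LM-head hidden states. I will then average over the old-stage distribution to pass from a per-example bound to a risk bound.

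\textbf{Setup.} Fix an old-stage example $z$ with target positions $\mathcal Y_z$. Write $\mathcal L_z(\mathbf h)=\frac{1}{|\mathcal Y_z|}\sum_{t\in\mathcal Y_z}\mathcal L_t(h_t^{\mathrm{out}})$ as a function of the stacked hidden states $\mathbf h=(h_t^{\mathrm{out}})_{t\in\mathcal Y_z}$. The adaptation in Eq.~\eqref{eq:hidden_adaptation} is the perturbation $\Delta=(\eta\alpha_t\bar d_z)_t$. To keep the analysis local, I treat the gates $\alpha_t$ as fixed at the unadapted states, which is reasonable because they are computed from $h_q$ and the current $h_t^{\mathrm{out}}$.

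\textbf{Descent lemma.} $\beta$-smoothness gives
\[
\mathcal L_z(\mathbf h+\Delta)\le \mathcal L_z(\mathbf h)+\langle\nabla\mathcal L_z,\Delta\rangle+\tfrac{\beta}{2}\|\Delta\|^2 .
\]
The linear term equals $-\eta\langle G_z^\alpha,\bar d_z\rangle$, where I define
\[
G_z^\alpha=\frac{1}{|\mathcal Y_z|}\sum_{t}\alpha_t\Bigl(-\frac{\partial\mathcal L_t}{\partial h_t^{\mathrm{out}}}\Bigr)=\frac{1}{|\mathcal Y_z|}\sum_t\alpha_t W_{\mathrm{LM}}^\top(e_{y_t}-p_t).
\]
This is the gated analogue of $R_m$ in Eq.~\eqref{eq:correction_direction}, and it connects to Proposition~\ref{prop:correction_optimality}. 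Writing $G_z^\alpha=\|G_z^\alpha\|_2 b_z$ yields the term $\eta\|G_z^\alpha\|_2\langle b_z,\bar d_z\rangle$.

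\textbf{Quadratic term.} Here $\|\Delta\|^2=\eta^2\sum_t\alpha_t^2\|\bar d_z\|^2$. Since $\alpha_t\in[0,1]$ and $\|\bar d_z\|=1$, this is at most $\eta^2$, provided $\beta$-smoothness is stated for the averaged loss with the $1/|\mathcal Y_z|$ normalization absorbed into $\beta$. I will state the assumption in exactly that form so the constant comes out as $\beta\eta^2/2$.

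\textbf{Risk bound and coverage.} Taking expectation over $z$ from an old stage gives
\[
\mathcal R_{\mathrm{old}}(\text{adapted})\le\mathcal R_{\mathrm{old}}-\mathbb E_z\bigl[\eta\|G_z^\alpha\|\langle b_z,\bar d_z\rangle-\tfrac{\beta\eta^2}{2}\bigr].
\]
To expose directional coverage, I will decompose $\langle b_z,\bar d_z\rangle\ge 1-\tfrac12\|b_z-\bar d_z\|^2$, which holds for unit vectors. I will then bound $\|b_z-\bar d_z\|$ by the coverage error $\min_j\|b_z-d_j\|$ plus a retrieval or aggregation error. This shows the gain depends on how well the bank's directions cover $\{b_z\}$, not on how many raw examples are stored. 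Optimizing over $\eta$ gives the gain $\|G_z^\alpha\|^2\langle b_z,\bar d_z\rangle^2/(2\beta)$ when the inner product is positive.

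\textbf{Main obstacle.} The delicate step is the coverage decomposition: relating the retrieved aggregate $\bar d_z$ to the best stored prototype. Aggregation via Eq.~\eqref{eq:direction_aggregation} can average in misaligned directions. I will first try a convexity bound: $\langle b_z,\sum_j w_jd_j\rangle=\sum_j w_j\langle b_z,d_j\rangle$, combined with dividing by $\|\sum_j w_jd_j\|\le1$, which only helps when the sum is positive. This gives $\langle b_z,\bar d_z\rangle\ge\sum_{j\in S_z}w_j\langle b_z,d_j\rangle$. The retrieval quality is then left as an explicit assumed term, namely the weighted alignment of the selected set, rather than proved.
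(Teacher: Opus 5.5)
Your argument matches the paper's proof: you apply the descent lemma to the $\beta$-smooth loss, compute the linear term as $-\eta\langle G_z^\alpha,\bar d_z\rangle=-\eta\|G_z^\alpha\|_2\langle b_z,\bar d_z\rangle$, bound the quadratic term by $\beta\eta^2/2$ using $\alpha_t\in[0,1]$ and $\|\bar d_z\|_2=1$, and then average over old-stage examples. You do not need to restate the smoothness assumption, since per-token $\beta$-smoothness already gives the quadratic term $\frac{\beta\eta^2}{2|\mathcal Y_z|}\sum_t\alpha_t^2\le\frac{\beta\eta^2}{2}$ (note that $\|\Delta\|^2$ itself can be as large as $|\mathcal Y_z|\eta^2$, so your intermediate claim $\|\Delta\|^2\le\eta^2$ is off by that factor). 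Your coverage decomposition and aggregation bound also go beyond what the theorem requires, because the paper simply defines $\varepsilon_{\mathcal B}(z)=1-\langle b_z,\bar d_z\rangle$.
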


This bound clarifies the role of memory in \sysname{}: stored prototypes should cover diverse historical correction directions that can reduce old-stage risk after retrieval and adaptation. 
This provides a unified theoretical view of the design choices, including direction prototypes in Section~\ref{sec:memory_unit}, direction-preserving memory repair in Section~\ref{sec:memory_writing}, and temporary hidden adaptation.
The update is discarded after generation, enabling local adaptation without modifying the backbone, updating the LM head, extending the prompt with retrieved examples, or running test-time backpropagation. 
This is especially suitable for SLM CL, where repeatedly writing new experiences into a capacity-limited backbone can overwrite prior capabilities.
An inference-time complexity analysis is in Appendix~\ref{app:inference_efficiency}.

\section{Experiments}
\label{sec:exp}
\subsection{Experimental Setup}





\textbf{Benchmarks.}
We evaluate our method on four supervised CL benchmarks covering classification, question answering (QA), and generation tasks: 
(i) \textbf{class-incremental learning (CIL)} on \textsc{Banking77}~\cite{Banking77}, an intent classification dataset with 77 classes, split into 7 sequential tasks with 11 new intents per task; 
(ii) \textbf{domain-incremental learning (DIL)} on two benchmarks: \textsc{Document Sentiment Classification} (\textsc{DSC})~\cite{DSC}, which contains 10 product-review domains with a fixed positive/negative label space, and \textsc{PAXQA}~\cite{PAXQA}, a multilingual QA dataset where 4 languages are treated as sequential domains; 
and (iii) \textbf{task-incremental learning (TIL)} on the \textsc{SuperNI Benchmark}~\cite{SuperNI}, a 15-task instruction-following benchmark covering dialogue generation, information extraction, QA, summarization, and sentiment classification. 

\textbf{Baselines.}
We compare \sysname{} with three groups of baselines:
(i) general language model CL baselines, including LoRA~\cite{LoRA} and MBPA++~\cite{MBPA};
(ii) memory-based LLM CL baselines, including CIS~\cite{CIS}, InCA~\cite{InCA}, CT0~\cite{CT0}, and FOREVER~\cite{FOREVER};
and (iii) SLM-oriented CL baselines, including DKVB~\cite{DKVB} and Srinath K \textit{et al.}~\cite{POS}.
Implementation details and the adaptation of each baseline to our supervised SLM CL setting are provided in Appendix~\ref{app:baselines}.

\textbf{Evaluation Metrics.}
We evaluate CL performance using a performance matrix $\mathbf{A}\in\mathbb{R}^{T\times T}$, where $A_{t,\tau}$ denotes the task-specific score on task $\tau$ after the model has learned task $t$. 
For classification tasks, $A_{t,\tau}$ is accuracy; for QA tasks, $A_{t,\tau}$ is exact match; and for generation tasks in \textsc{SuperNI}, $A_{t,\tau}$ is Rouge-L. 
All scores are scaled to $[0,100]$. 
We report \textbf{Overall Performance (OP)} as $\mathrm{OP}=\frac{1}{T}\sum_{\tau=1}^{T}A_{T,\tau}$, which measures the final average task score over all tasks, and \textbf{Backward Transfer (BWT)} as $\mathrm{BWT}=\frac{1}{T-1}\sum_{\tau=1}^{T-1}(A_{T,\tau}-A_{\tau,\tau})$, which measures how learning later tasks affects earlier tasks. 
Higher OP and larger BWT are better.

\textbf{Implementation Details.}
We evaluate all methods on instruction-tuned base models, including Qwen3 (0.6B, 1,7B, 4B)~\cite{Qwen3} and LLaMA (3.2-3B, 3.1-8B)~\cite{Llama}.
Unless otherwise specified, we use a fixed 1\% memory budget, measured as the byte size of 1\% of the full training stream.
For \sysname{}, we set the direction-merge threshold to $\tau_d=0.85$ and the semantic-anchor novelty threshold to $\tau_k=0.80$. 
A sensitivity analysis in Appendix~\ref{app:hyperparameter_sensitivity} shows that \sysname{} remains stable across both thresholds.
During inference, we retrieve $R_k=R_d=8$ candidates and aggregate the top $M=4$ prototypes after reranking with weights $\lambda_k=1.0$ and $\lambda_u=0.5$, using $T=0.2$ and $\eta=0.5$. 
Results average three independent runs. 
Algorithm~\ref{alg:miita} in Appendix~\ref{app:miita_algorithm} details the implementation.

\subsection{Overall Performance}

\begin{table*}[t]
\centering
\footnotesize
\setlength{\tabcolsep}{3pt}
\caption{
Overall continual learning performance on four datasets.
\textbf{Bold} indicates the best result, \underline{underline} indicates the second-best result, and $\uparrow$ indicates that higher values are better.
}
\label{tab:overall_performance_scaling}
\begin{tabular}{clcc|cc|cc|cc|cc}
\toprule
\multirow{2.5}{*}{\makecell[c]{\textbf{Base}\\\textbf{Model}}}
& \multirow{2.5}{*}{\textbf{Method}} 
& \multirow{2.5}{*}{\makecell[c]{\textbf{Memory-}\\\textbf{Based}}}
& \multirow{2.5}{*}{\makecell[c]{\textbf{Designed}\\\textbf{For}}}
& \multicolumn{2}{c}{\textsc{Banking77}} 
& \multicolumn{2}{|c|}{\textsc{DSC}} 
& \multicolumn{2}{c}{\textsc{PAXQA}} 
& \multicolumn{2}{|c}{\textsc{SuperNI}} \\
\cmidrule(lr){5-6}\cmidrule(lr){7-8}\cmidrule(lr){9-10}\cmidrule(lr){11-12}
& & & & OP$\uparrow$ & BWT$\uparrow$ & OP$\uparrow$ & BWT$\uparrow$ & OP$\uparrow$ & BWT$\uparrow$ & OP$\uparrow$ & BWT$\uparrow$ \\
\midrule

\multirow{9.5}{*}{\rotatebox[origin=c]{90}{\textbf{Qwen3-0.6B}}}
& LoRA~\cite{LoRA} 
& \xmark & -- 
& 65.31 & -11.44 
& 66.04 & -11.53 
& 40.75 & -5.57 
& 31.24 & -8.91 \\

& MBPA++~\cite{MBPA} 
& \cmark & LM 
& 67.82 & -10.22 
& 67.22 & -9.12 
& 40.48 & -6.12 
& 31.58 & -10.21 \\

\cmidrule(lr){2-12}

& CIS~\cite{CIS} 
& \cmark & LLM 
& 66.78 & -10.60 
& 65.47 & -10.66 
& 38.96 & -6.74 
& 26.51 & -11.05 \\

& InCA~\cite{InCA} 
& \cmark & LLM 
& 71.20 & -8.72 
& 66.79 & -10.25 
& 39.47 & -6.17 
& 28.58 & -11.17 \\

& CT0~\cite{CT0} 
& \cmark & LLM 
& 68.79 & -9.78 
& 67.71 & -9.75 
& 41.46 & -5.75 
& 30.43 & -10.82 \\

& FOREVER~\cite{FOREVER} 
& \cmark & LLM 
& 70.58 & -9.14 
& 68.12 & -9.63 
& 42.54 & -5.24 
& 32.82 & -9.95 \\

\cmidrule(lr){2-12}

& DKVB~\cite{DKVB} 
& \xmark & SLM 
& \underline{73.12} & \underline{-7.43} 
& \underline{75.47} & \underline{-7.23} 
& 53.45 & -4.64 
& \underline{37.89} & \underline{-6.72} \\

& Srinath K \textit{et al.}~\cite{POS} 
& \cmark & SLM 
& 70.44 & -7.92 
& 69.42 & -9.24 
& \underline{55.64} & \underline{-4.43} 
& 35.54 & -7.44 \\

& \cellcolor{gray!20}\textbf{\sysname{} (ours)} 
& \cellcolor{gray!20}\cmark 
& \cellcolor{gray!20}SLM 
& \cellcolor{gray!20}\textbf{77.67} 
& \cellcolor{gray!20}\textbf{-6.17} 
& \cellcolor{gray!20}\textbf{78.29} 
& \cellcolor{gray!20}\textbf{-6.14} 
& \cellcolor{gray!20}\textbf{60.01} 
& \cellcolor{gray!20}\textbf{-3.12} 
& \cellcolor{gray!20}\textbf{40.58} 
& \cellcolor{gray!20}\textbf{-4.89} \\

\midrule

\multirow{9.5}{*}{\rotatebox[origin=c]{90}{\textbf{Qwen3-4B}}}
& LoRA~\cite{LoRA} 
& \xmark & -- 
& 68.42 & -9.85 
& 69.38 & -9.74 
& 44.92 & -4.82 
& 34.17 & -7.96 \\

& MBPA++~\cite{MBPA} 
& \cmark & LM 
& 71.36 & -8.71 
& 70.54 & -7.82 
& 44.15 & -5.05 
& 36.35 & -8.41 \\

\cmidrule(lr){2-12}

& CIS~\cite{CIS} 
& \cmark & LLM 
& 71.84 & -8.88 
& 69.75 & -8.92 
& 43.05 & -5.73 
& 38.40 & -8.92 \\

& InCA~\cite{InCA} 
& \cmark & LLM 
& \underline{76.48} & -6.95 
& 70.95 & -8.76 
& 43.74 & -5.28 
& 40.12 & -8.64 \\

& CT0~\cite{CT0} 
& \cmark & LLM 
& 73.92 & -7.78 
& 71.86 & -8.15 
& 46.18 & -4.52 
& 42.68 & -8.37 \\

& FOREVER~\cite{FOREVER} 
& \cmark & LLM 
& 75.10 & -7.21 
& 72.64 & -7.65 
& 47.32 & -4.12 
& 44.00 & -8.05 \\

\cmidrule(lr){2-12}

& DKVB~\cite{DKVB} 
& \xmark & SLM 
& 76.22 & \underline{-6.31} 
& \underline{78.10} & \underline{-6.05} 
& 57.82 & -3.96 
& \underline{44.76} & \underline{-6.08} \\

& Srinath K \textit{et al.}~\cite{POS} 
& \cmark & SLM 
& 73.18 & -6.94 
& 72.68 & -7.86 
& \underline{61.47} & \underline{-3.62} 
& 41.85 & -6.91 \\

& \cellcolor{gray!20}\textbf{\sysname{} (ours)} 
& \cellcolor{gray!20}\cmark 
& \cellcolor{gray!20}SLM 
& \cellcolor{gray!20}\textbf{81.05} 
& \cellcolor{gray!20}\textbf{-5.02} 
& \cellcolor{gray!20}\textbf{81.66} 
& \cellcolor{gray!20}\textbf{-4.95} 
& \cellcolor{gray!20}\textbf{65.21} 
& \cellcolor{gray!20}\textbf{-2.38} 
& \cellcolor{gray!20}\textbf{48.21} 
& \cellcolor{gray!20}\textbf{-4.27} \\

\bottomrule
\end{tabular}
\end{table*}

Table~\ref{tab:overall_performance_scaling} reports the main results for Qwen3-0.6B and Qwen3-4B under a 1\% persistent memory budget. \sysname{} consistently achieves the best OP and BWT across all benchmarks. This simultaneous improvement confirms that \sysname{} genuinely enhances the stability-plasticity trade-off, successfully incorporating new experiences without simply overfitting later stages at the expense of earlier ones.
Crucially, \sysname{} overcomes the fundamental limitations of both parameter-update and textual-memory paradigms. Parameter-updating baselines (e.g., LoRA) suffer from severe forgetting, while SLM-oriented methods show only task-dependent gains, confirming that repeatedly modifying limited SLM capacity is unreliable. Similarly, memory-based LLM methods struggle under strict storage budgets, especially on the smaller 0.6B backbone and open-ended tasks. This validates our intuition that textual memory is inefficient for SLMs, as it consumes surface-form storage and demands strong in-context reasoning. 
Overall, \sysname{}'s consistent superiority validates its core design. Rather than destructively overwriting parameters or relying on surface textual prompts, \sysname{} preserves compact functional correction directions. Applying these directions via disposable inference-time hidden adaptation provides a more direct, reusable, and non-destructive mechanism for continual learning.

\subsection{Model-Size Scaling under a Fixed Memory Budget}
\label{sec:model_size_scaling}

\begin{wrapfigure}{R}{0.7\textwidth}
    \centering
    \includegraphics[width=0.7\textwidth]{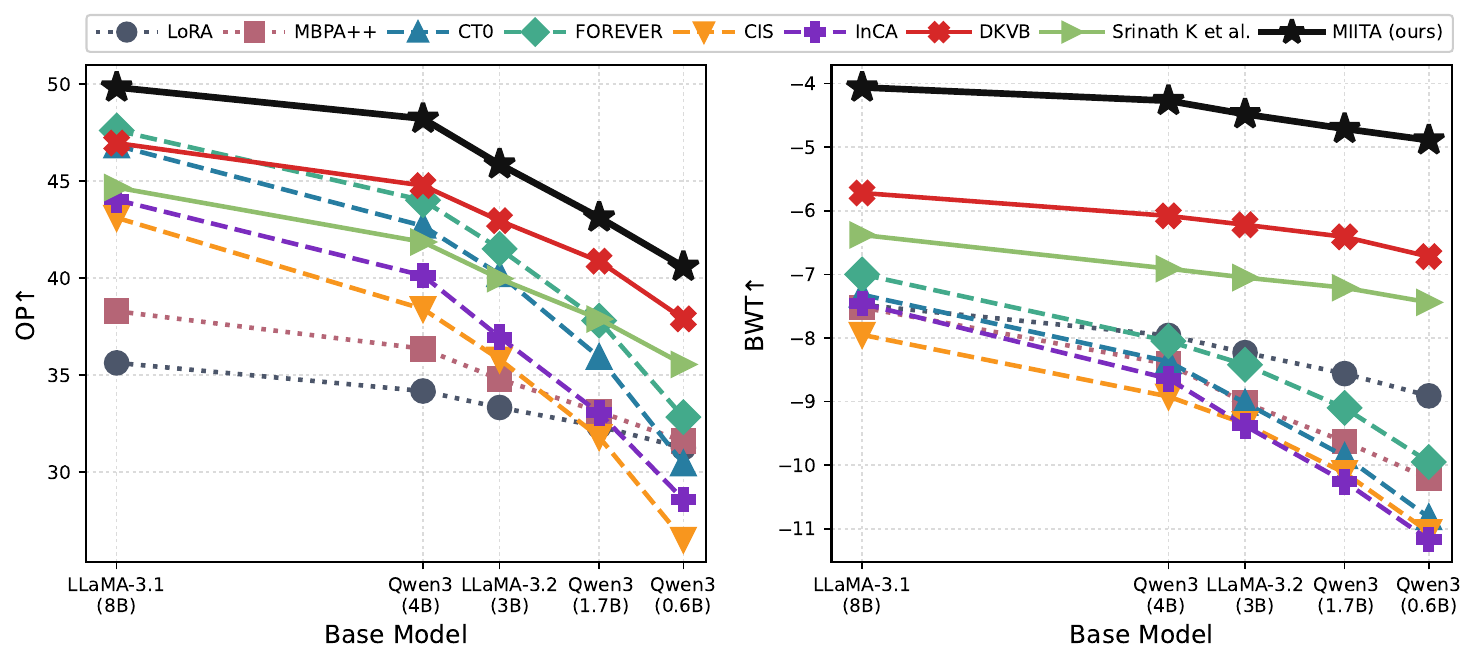}
    \caption{
    Model-size scaling on \textsc{SuperNI} under 1\% memory budget. 
    }
    \label{fig:superni_scaling}
\end{wrapfigure}
To investigate whether existing methods remain effective as the base model becomes smaller under the same memory budget, we conduct model scaling experiments, as shown in Figure~\ref{fig:superni_scaling}.  The results reveal a severe degradation in memory methods designed for LLMs. Specifically, replay-based methods (CT0, FOREVER) drop substantially, highlighting their reliance on large parameter capacity to absorb and retain knowledge. Similarly, prompting-based methods (CIS, InCA) falter because SLMs lack the strong in-context reasoning required to utilize textual memories. While SLM-oriented baselines (DKVB, Srinath K \textit{et al.}) exhibit better relative robustness, their performance still declines, indicating that purely architectural or replay-adapter solutions are insufficient. In contrast, \sysname{} consistently achieves the best OP and BWT across all scales. By preserving compact functional correction directions rather than raw text or examples, \sysname{} ensures that historical memory remains highly effective despite limited model capacity and tight storage budgets.

\subsection{Memory Budget Sensitivity}
\label{sec:budget_sensitivity}

\begin{wrapfigure}{R}{0.48\textwidth}
    \centering
    \includegraphics[width=0.48\textwidth]{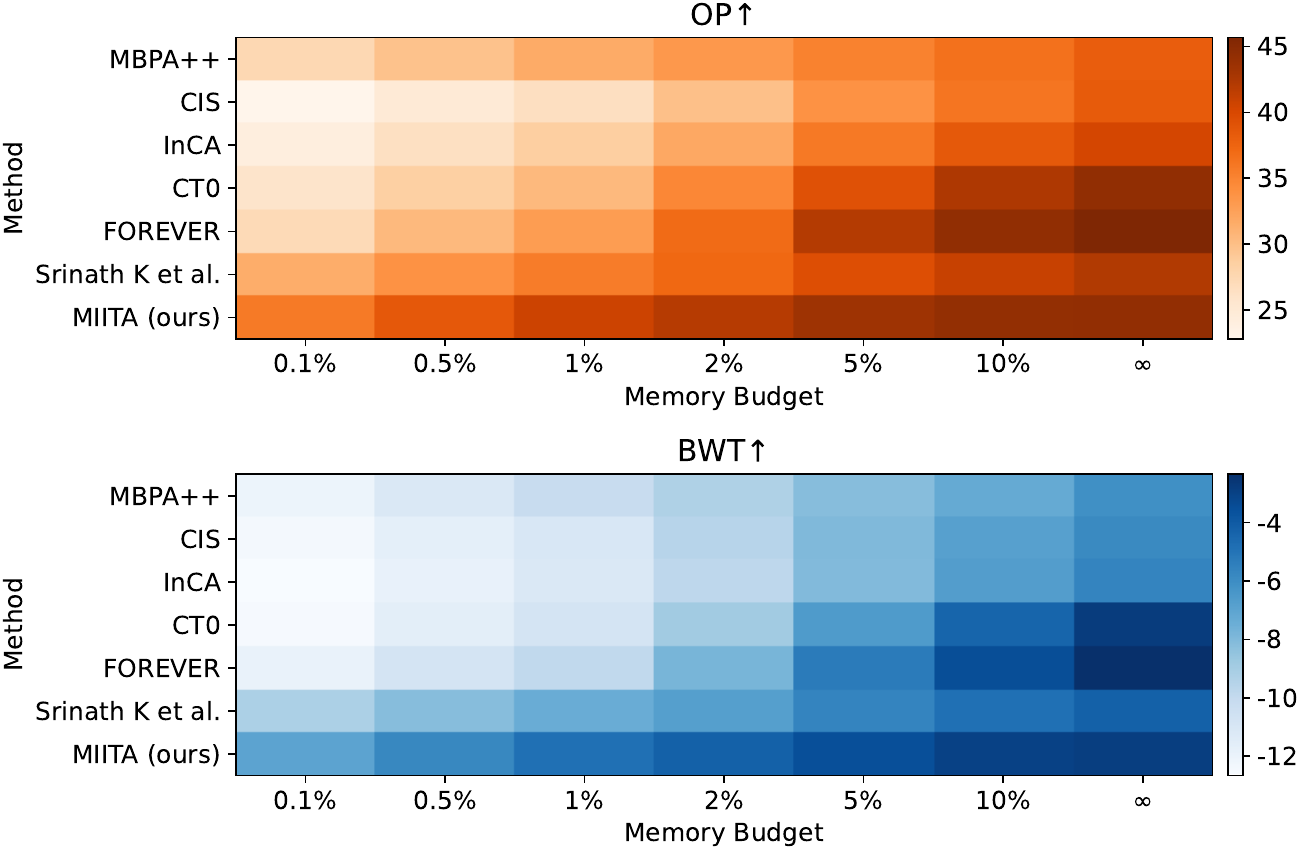}
    \caption{
    Budget sensitivity on \textsc{SuperNI} with Qwen3-0.6B for memory-based methods. 
    Each column denotes a fixed auxiliary memory budget; $\infty$ denotes the unlimited-memory setting. 
    }
    \label{fig:budget_sensitivity}
\end{wrapfigure}

Figure~\ref{fig:budget_sensitivity} evaluates memory-based CL methods under increasing storage budgets, focusing on those that scale with external memory and omitting fixed-storage baselines such as LoRA and DKVB. The results show that existing memory-based LLM CL methods are highly sensitive to available storage: replay-based methods like CT0 and FOREVER improve substantially as the budget increases, with the latter becoming strongest in the practically unlimited-memory setting by leveraging large buffers to retain extensive raw supervision from previous tasks. However, their performance drops sharply under tight budgets, indicating that replaying old examples is inefficient when memory is minimal. In contrast, \sysname{} remains consistently strong in the low-budget regime (0.1\% to 2\%), where the gap between our approach and memory methods designed for LLMs is most pronounced. This supports our central motivation: for SLM continual learning, the key challenge is not merely whether memory is used, but whether the stored representation is compact and functionally useful. 
While methods storing raw examples, summaries, or prompt contexts require significantly larger budgets to be competitive, \sysname{} stores supervised correction directions that directly encode reusable update signals.

\subsection{Effect of Functional Correction Directions}
\label{sec:functional_memory}

A key question in \sysname{} is whether supervised experiences should be stored as textual or semantic content, or as functional correction directions that encode their effect on the model prediction.

\begin{wrapfigure}{R}{0.65\textwidth}
    \centering
    \begin{minipage}[t]{0.32\textwidth}
        \centering
        
        \includegraphics[width=1\linewidth]{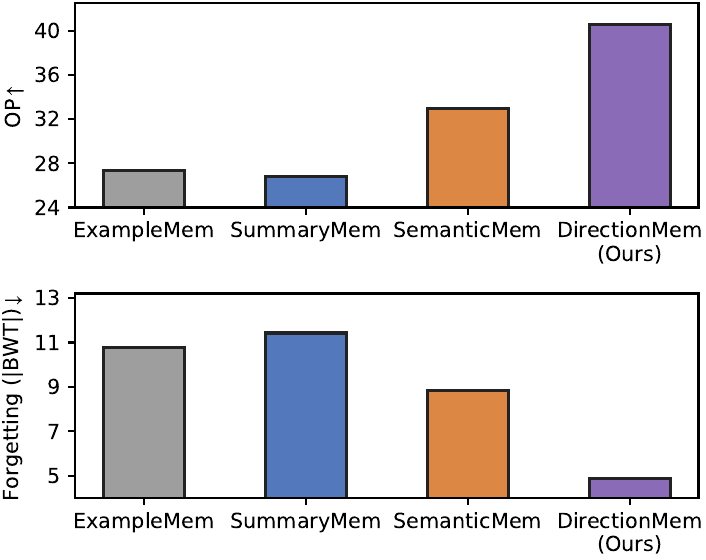}
        \\(a)
    \end{minipage}
    \hfill
    \begin{minipage}[t]{0.32\textwidth}
        \centering
        
        \includegraphics[width=1\linewidth]{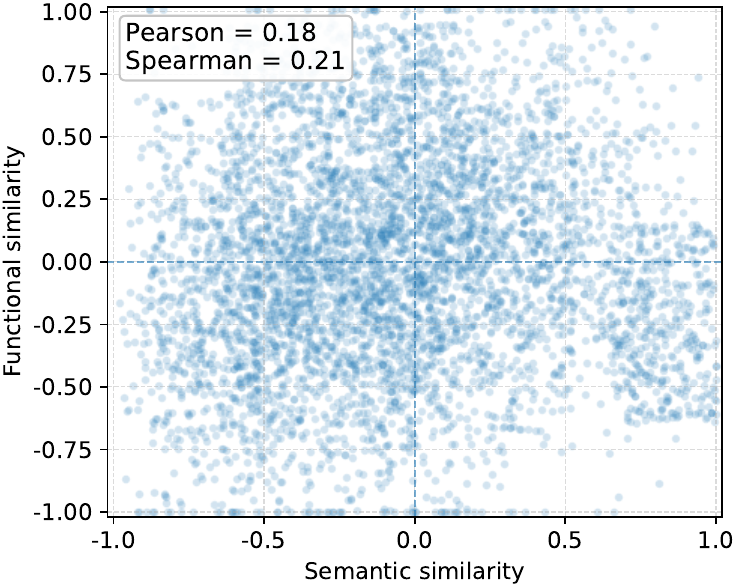}
        \\(b)
    \end{minipage}
    \caption{
    Analysis of why \sysname{} stores functional correction directions on \textsc{SuperNI}.
    \textbf{(a)} Comparison of memory variants with Qwen3-0.6B under a fixed 1\% memory budget.
    \textbf{(b)} Pairwise comparison between semantic and functional similarity, where each point represents a pair of training examples.
    }
    \label{fig:functional_analysis}
\end{wrapfigure}
\textbf{Memory variant comparison.}
To first isolate the effect of what is stored in memory, we compare four variants differing exclusively in their memory content: \textbf{ExampleMem} stores raw supervised examples $(x,y)$, \textbf{SummaryMem} stores textual summaries of historical examples, \textbf{SemanticMem} stores semantic embedding prototypes, and \textbf{DirectionMem} stores correction-direction prototypes induced by supervised examples. 
Implementation details are provided in Appendix~\ref{app:memory_item_variants}. 
As shown in Figure~\ref{fig:functional_analysis}(a), ExampleMem and SummaryMem are limited by the weak in-context utilization of small models, since they require the SLM to reuse retrieved textual contexts. 
SemanticMem avoids long textual memories and is more compact, but its semantic-redundancy-based compression does not reliably preserve the functional update signals needed under a tight storage budget. 
In contrast, DirectionMem directly stores correction-direction prototypes, preserving how past supervised examples should change the model's prediction. 
Its superior OP and forgetting results show that \sysname{} benefits from capturing functional redundancy, rather than merely storing human-readable text or semantic similarity.

\textbf{Semantic vs. functional redundancy.}
To further validate the divergence between semantic and functional redundancy, we compare pairwise cosine similarity among training examples, using semantic keys for semantic similarity and correction directions for functional similarity. 
The two measures are only weakly correlated, with Pearson correlation 0.18 and Spearman correlation 0.21, as visualized in Figure~\ref{fig:functional_analysis}(b). 
The Pearson correlation corresponds to only about $R^2\approx 0.03$, meaning that semantic similarity explains less than 5\% of the variance in functional similarity. 
Consistent with this, the points are widely scattered rather than concentrated along an increasing diagonal trend: semantically similar examples can still require very different correction directions, while semantically distant examples may share useful functional updates. 
This suggests that purely semantic compression can miss many functionally important directions, whereas direction-level prototypes preserve supervised update signals according to their functional effects.
\subsection{Validating Uncertainty-Guided Retrieval}
\label{sec:uncertainty_proxy}

\begin{wraptable}{R}{0.405\textwidth}
\centering
\footnotesize
\setlength{\tabcolsep}{5pt}
\caption{
Validation of uncertainty-guided retrieval on \textsc{SuperNI} with Qwen3-0.6B under 1\% memory budget. OracleRet is the label-informed upper bound. 
}
\label{tab:uncertainty_proxy}
\begin{tabular}{lccc}
\toprule
\textbf{Retrieval Variant} 
& OP$\uparrow$ 
& BWT$\uparrow$ 
& OA$\uparrow$ \\
\midrule
RandomRet
& 28.85 
& -12.63 
& 0.03 \\

SemanticRet
& 36.21 
& -6.98 
& 0.35 \\

UncertaintyRet
& 34.74 
& -8.92 
& 0.42 \\

\rowcolor{gray!20}SemUncRet (Ours)
& \textbf{40.58} 
& \textbf{-4.89} 
& \textbf{0.52} \\
\midrule
OracleRet
& 47.29
& -2.42
& -- \\
\bottomrule
\end{tabular}
\end{wraptable}
To evaluate \sysname{}'s retrieval design by ablating inference-time query signals, we compare variants differing exclusively in how prototypes are retrieved: \textbf{RandomRet} uniformly samples prototypes; \textbf{SemanticRet} relies solely on semantic-key similarity $S_k(q,j)$; \textbf{UncertaintyRet} scores stored directions $d_j$ using only the uncertainty-induced direction; \textbf{SemUncRet} employs the full \sysname{} strategy; and \textbf{OracleRet} provides an upper bound using a label-derived oracle correction direction.
To quantify consistency with supervised corrections, we report \textbf{Oracle Alignment (OA)}, the average cosine alignment between the retrieved and oracle directions.
Implementation details of the retrieval variants and OA are provided in Appendix~\ref{app:retrieval_variants}.
Table~\ref{tab:uncertainty_proxy} confirms uncertainty is an effective query-side proxy. UncertaintyRet outperforms RandomRet and achieves higher OA than SemanticRet, proving it provides a meaningful label-free geometric cue. However, UncertaintyRet yields lower OP and BWT than SemanticRet because uncertainty captures local geometry but lacks semantic relevance. The optimal variant, SemUncRet, resolves this: semantic recall ensures relevance, while uncertainty prioritizes correction utility. This validates \sysname{}'s design: uncertainty is a powerful label-free proxy, maximizing effectiveness when combined with semantic recall for reranking.

\section{Conclusion}
\label{sec:conclusion}
We introduced \sysname{}, a memory-induced inference-time adaptation framework for supervised continual learning (CL) in small language models (SLMs) under constrained persistent storage. 
Unlike replay-based or prompting-based memory methods, \sysname{} preserves historical experiences as compact functional correction directions rather than raw examples, textual summaries, or long contexts. 
These directions are organized into budget-aware prototypes with lightweight semantic anchors and are invoked at inference time through semantic and uncertainty-guided retrieval followed by gated temporary hidden-state adaptation. 
This enables non-destructive reuse of past supervision without backbone updates, prompt extension, or test-time backpropagation. 
Experiments across diverse CL settings show that \sysname{} improves final performance and mitigates forgetting under fixed memory budgets, while further analyses demonstrate its robustness across model scales and memory budgets and validate the roles of functional correction memory and uncertainty-guided retrieval. 
Overall, this work suggests that SLM CL can benefit from shifting memory design away from storing textual content toward storing reusable functional adaptation signals.

\bibliographystyle{abbrv}
\bibliography{refer}

\newpage
\appendix
\section{Theoretical Analysis}
\label{app:theory}

We provide a local theoretical analysis of \sysname{} to justify the main components used in Section~\ref{sec:miita_overview}: correction-direction extraction, uncertainty-guided retrieval, temporary hidden adaptation, and direction-memory coverage. 
The analysis does not assume global convexity or task stationarity; it only uses differentiability of the LM-head loss and first-order local approximations.

\paragraph{Notation.}
For a target token $y$ with pre-LM-head hidden state $h\in\mathbb R^H$, define
\[
\ell_y(h)=-\log \mathrm{softmax}(W_{\mathrm{LM}}h)[y].
\]
Let $p=\mathrm{softmax}(W_{\mathrm{LM}}h)$. Then
\[
\nabla_h \ell_y(h)=W_{\mathrm{LM}}^\top(p-e_y),
\qquad
-\nabla_h \ell_y(h)=W_{\mathrm{LM}}^\top(e_y-p).
\]
For an experience $m$, define its hidden-perturbed loss
\[
\mathcal L_m(\delta)
=
\frac{1}{|\mathcal Y_m|}
\sum_{t\in\mathcal Y_m}
\ell_{y_t}(h_t^{\mathrm{out}}+\delta).
\]
Then the correction vector used by \sysname{} is exactly
\[
R_m=-\nabla_\delta \mathcal L_m(0).
\]

\begin{proposition}[First-order optimality of correction directions]
\label{prop:correction_optimality}
Assume $R_m\neq 0$ and let $d_m=R_m/\|R_m\|_2$. 
For any unit direction $d$ and sufficiently small $\eta>0$,
\[
\mathcal L_m(\eta d)
=
\mathcal L_m(0)
-
\eta \langle R_m,d\rangle
+
o(\eta).
\]
Therefore, among all unit-norm hidden-space directions, $d_m$ maximizes the first-order decrease of $\mathcal L_m$:
\[
d_m
=
\arg\max_{\|d\|_2=1}
\langle R_m,d\rangle.
\]
\end{proposition}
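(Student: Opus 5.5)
The argument has two steps: a first-order Taylor expansion of $\mathcal L_m$ at $\delta=0$, followed by Cauchy--Schwarz on the unit sphere.

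\emph{Differentiability.} Each summand $\ell_{y_t}(h_t^{\mathrm{out}}+\delta)$ is the composition of the affine map $\delta\mapsto W_{\mathrm{LM}}(h_t^{\mathrm{out}}+\delta)$ with the log-sum-exp function minus a linear term. Both pieces are $C^\infty$, so $\mathcal L_m$ is a finite average of smooth functions and is itself $C^\infty$ on $\mathbb R^H$. By the Notation paragraph, its gradient at the origin is
\[
\nabla_\delta\mathcal L_m(0)=\frac{1}{|\mathcal Y_m|}\sum_{t\in\mathcal Y_m}W_{\mathrm{LM}}^\top(p_t-e_{y_t})=-R_m .
\]

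\emph{The expansion.} Since $\mathcal L_m$ is Fréchet differentiable at $0$,
\[
\mathcal L_m(\delta)=\mathcal L_m(0)+\langle\nabla\mathcal L_m(0),\delta\rangle+o(\|\delta\|_2).
\]
Setting $\delta=\eta d$ with $\|d\|_2=1$ gives $\|\delta\|_2=\eta$, so
\[
\mathcal L_m(\eta d)=\mathcal L_m(0)-\eta\langle R_m,d\rangle+o(\eta).
\]
This is the displayed identity in the statement. A stronger, uniform-in-$d$ version also holds. The softmax Jacobian $\mathrm{diag}(p)-pp^\top$ has operator norm at most $1$ (in fact at most $1/2$). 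Hence the Hessian of each $\ell_y$ is bounded by $\|W_{\mathrm{LM}}\|_2^2$, and $\mathcal L_m$ is $\beta$-smooth with $\beta\le\|W_{\mathrm{LM}}\|_2^2$. Taylor's theorem then bounds the remainder by $\beta\eta^2/2$ for every unit $d$. This quadratic bound is also exactly what the later directional-coverage theorem needs.

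\emph{The maximization.} The first-order decrease is $\eta\langle R_m,d\rangle$, so I need to maximize the linear functional $d\mapsto\langle R_m,d\rangle$ over the unit sphere. Cauchy--Schwarz gives $\langle R_m,d\rangle\le\|R_m\|_2$, with equality if and only if $d$ is a nonnegative multiple of $R_m$. Because $R_m\neq0$ and $\|d\|_2=1$, equality forces $d=R_m/\|R_m\|_2=d_m$, which shows the argmax is unique. I would also remark that the $\epsilon$-regularized normalization $\mathrm{Norm}$ used in the method only rescales $d_m$ by a positive constant, so it points in the same direction.

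The main obstacle is interpretive rather than technical. The $o(\eta)$ term depends on $d$, so the phrase ``$d_m$ maximizes the first-order decrease'' must be read as a statement about the linear coefficient $\langle R_m,d\rangle$, not as a claim that $\mathcal L_m(\eta d_m)\le\mathcal L_m(\eta d)$ for a fixed $\eta$. I would state this explicitly. I would then use the uniform $\beta\eta^2/2$ remainder to add a quantitative corollary: for $\eta<\|R_m\|_2/\beta$ we get $\mathcal L_m(\eta d_m)<\mathcal L_m(0)$, and $d_m$ maximizes the guaranteed lower bound on the decrease, $\eta\langle R_m,d\rangle-\beta\eta^2/2$.
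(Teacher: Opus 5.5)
Your proposal is correct and follows essentially the same route as the paper: a first-order Taylor expansion of $\mathcal L_m$ at $\delta=0$ using $R_m=-\nabla_\delta\mathcal L_m(0)$, followed by Cauchy--Schwarz, with equality on the unit sphere exactly at $d_m$. Your additions are correct refinements the paper omits: the differentiability justification, and the uniform $\beta\eta^2/2$ remainder, which follows because $\mathrm{diag}(p)-pp^\top$ has operator norm at most $1/2$.
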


\begin{proof}
By the first-order Taylor expansion of $\mathcal L_m$ around $\delta=0$,
\[
\mathcal L_m(\eta d)
=
\mathcal L_m(0)
+
\eta \langle \nabla_\delta \mathcal L_m(0),d\rangle
+
o(\eta).
\]
Since $R_m=-\nabla_\delta \mathcal L_m(0)$, we obtain
\[
\mathcal L_m(\eta d)
=
\mathcal L_m(0)
-
\eta \langle R_m,d\rangle
+
o(\eta).
\]
The first-order loss decrease is therefore proportional to $\langle R_m,d\rangle$. 
By Cauchy--Schwarz,
\[
\langle R_m,d\rangle
\le
\|R_m\|_2\|d\|_2
=
\|R_m\|_2,
\]
with equality if and only if $d=R_m/\|R_m\|_2=d_m$. 
Hence $d_m$ is the steepest first-order correction direction in the pre-LM-head hidden space.
\end{proof}

\begin{proposition}[Uncertainty direction as a local geometric proxy]
\label{prop:uncertainty_proxy}
Fix the restricted vocabulary set $\mathcal V_q$ at query state $h_q$. Let
\[
\hat p_q(v)
=
\frac{p_q(v)}
{\sum_{v'\in\mathcal V_q}p_q(v')},
\qquad
v\in\mathcal V_q,
\]
and
\[
\hat H_q
=
-\sum_{v\in\mathcal V_q}
\hat p_q(v)\log \hat p_q(v).
\]
Let $\hat z_q=W_{\mathcal V_q}h_q$ be the restricted logits. Then
\[
g_q
=
\frac{\partial \hat H_q}{\partial \hat z_q}
=
-\hat p_q\odot
\left(
\log \hat p_q+\hat H_q\mathbf 1
\right),
\]
and
\[
r_q
=
W_{\mathcal V_q}^\top g_q
=
\nabla_{h_q}\hat H_q.
\]
Therefore, if $r_q\neq 0$, the direction
\[
\tilde d_q
=
-\frac{r_q}{\|r_q\|_2}
\]
is the steepest first-order direction for decreasing the restricted entropy.
\end{proposition}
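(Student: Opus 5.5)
The plan is a direct computation in three stages: the softmax Jacobian restricted to $\mathcal V_q$, the chain rule through the linear map $\hat z_q=W_{\mathcal V_q}h_q$, and the same Cauchy--Schwarz argument used in Proposition~\ref{prop:correction_optimality}.

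The first step is to note that the renormalized probabilities satisfy $\hat p_q=\mathrm{softmax}(\hat z_q)$. Renormalizing $p_q(v)=e^{z_v}/Z$ over $\mathcal V_q$ cancels the global partition function and leaves $e^{\hat z_v}/\sum_{v'\in\mathcal V_q}e^{\hat z_{v'}}$. This is where the hypothesis that $\mathcal V_q$ is fixed is used: the top-$K_v$ selection is piecewise constant in $h_q$ and is treated as frozen, so it contributes no derivative. With this identification, $\hat H_q$ becomes a smooth function of $\hat z_q$ alone.

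Next I would differentiate $\hat H_q=-\sum_v \hat p_v\log\hat p_v$, using the softmax Jacobian $\partial\hat p_v/\partial\hat z_u=\hat p_v(\mathbb 1[u=v]-\hat p_u)$. Applying the chain rule,
\[
\frac{\partial \hat H_q}{\partial \hat z_u}=-\sum_{v}(\log\hat p_v+1)\,\hat p_v(\mathbb 1[u=v]-\hat p_u)
=-\hat p_u(\log\hat p_u+1)+\hat p_u\sum_v \hat p_v(\log\hat p_v+1).
\]
The last sum equals $1-\hat H_q$, so the $+1$ terms cancel and the right-hand side reduces to $-\hat p_u(\log\hat p_u+\hat H_q)$. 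This is exactly the claimed expression for $g_q$. The only step that needs care is this bookkeeping of the constant terms; otherwise the argument is routine.

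Since $\hat z_q$ depends linearly on $h_q$, the chain rule then gives $\nabla_{h_q}\hat H_q=W_{\mathcal V_q}^\top g_q=r_q$. Finally, Taylor's theorem gives $\hat H_q(h_q+\eta d)=\hat H_q(h_q)+\eta\langle r_q,d\rangle+o(\eta)$. Over unit vectors $d$, Cauchy--Schwarz bounds $\langle r_q,d\rangle\ge -\|r_q\|_2$, with equality if and only if $d=-r_q/\|r_q\|_2=\tilde d_q$. This requires $r_q\neq0$, which is assumed. Hence $\tilde d_q$ is the steepest first-order direction for decreasing $\hat H_q$, by the same argument as in the proof of Proposition~\ref{prop:correction_optimality}.
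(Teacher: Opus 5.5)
Your proposal is correct and follows essentially the same route as the paper's proof: the restricted softmax Jacobian, the chain rule through $\hat z_q=W_{\mathcal V_q}h_q$, and a first-order Taylor expansion minimized over unit directions. Your explicit identification $\hat p_q=\mathrm{softmax}(\hat z_q)$ and the bookkeeping $\sum_v\hat p_v(\log\hat p_v+1)=1-\hat H_q$ just spell out steps the paper compresses into ``for restricted softmax probabilities'' and ``rearranging.''
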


\begin{proof}
For restricted softmax probabilities $\hat p$, the entropy is
\[
\hat H=-\sum_v \hat p_v\log \hat p_v.
\]
Using the softmax Jacobian
\[
\frac{\partial \hat p_i}{\partial \hat z_j}
=
\hat p_i(\mathbf 1[i=j]-\hat p_j),
\]
we have
\[
\frac{\partial \hat H}{\partial \hat z_j}
=
\sum_i
-(\log \hat p_i+1)
\hat p_i(\mathbf 1[i=j]-\hat p_j).
\]
Rearranging gives
\[
\frac{\partial \hat H}{\partial \hat z_j}
=
-\hat p_j(\log \hat p_j+\hat H).
\]
Thus
\[
g_q
=
-\hat p_q\odot
(\log \hat p_q+\hat H_q\mathbf 1).
\]
Since $\hat z_q=W_{\mathcal V_q}h_q$, the chain rule gives
\[
\nabla_{h_q}\hat H_q
=
W_{\mathcal V_q}^\top g_q
=
r_q.
\]
For any unit direction $d$,
\[
\hat H_q(h_q+\eta d)
=
\hat H_q(h_q)
+
\eta \langle r_q,d\rangle
+
o(\eta).
\]
The first-order entropy decrease is maximized by minimizing $\langle r_q,d\rangle$, whose unit-norm solution is $d=-r_q/\|r_q\|_2$.
\end{proof}

Moreover,
\[
U(q,j)
=
\max
\left(
0,
-g_q^\top \epsilon W_{\mathcal V_q}d_j
\right)
=
\epsilon
\max
\left(
0,
-\langle r_q,d_j\rangle
\right).
\]
Therefore $U(q,j)$ is exactly the positive part of the first-order restricted-entropy decrease induced by moving along historical direction $d_j$.
This proxy does not assume that the entropy-decreasing direction equals the ground-truth correction direction. 
Rather, it exposes the local geometry of the current predictive distribution and is used as a label-free auxiliary retrieval cue for comparing the query state with stored historical directions.

\begin{proposition}[Local least-squares derivation of the hidden adapter]
\label{prop:hidden_adapter_lsq}
Given selected memory directions $\{d_j\}_{j\in S_q}$ and weights $w_j\ge 0$, $\sum_j w_j=1$, consider the problem of representing them by a single unit correction direction:
\[
\min_{\|d\|_2=1}
\sum_{j\in S_q}
w_j\|d-d_j\|_2^2.
\]
If $\sum_j w_j d_j\neq 0$, the solution is
\[
\bar d_q
=
\frac{\sum_{j\in S_q}w_jd_j}
{\left\|\sum_{j\in S_q}w_jd_j\right\|_2}.
\]
Furthermore, with
\[
\hat h_q=\mathrm{Norm}(h_q),
\qquad
\hat h_t=\mathrm{Norm}(h_t^{\mathrm{out}}),
\]
the activation gate
\[
\alpha_t
=
\max(0,\cos(\hat h_t,\hat h_q))
\]
is the nonnegative least-squares coefficient for projecting the current decoding state onto the query state. 
\end{proposition}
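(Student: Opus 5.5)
The proof splits into two independent parts, both elementary: a constrained quadratic minimization on the unit sphere for the aggregated direction, and a one-dimensional nonnegative least-squares problem for the gate.

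\emph{Part 1: the aggregated direction.} Expand the objective using $\|d\|_2=1$ and $\|d_j\|_2=1$, which holds for every stored direction since each $d_j$ is normalized by construction (Eq.~\eqref{eq:prototype_update}). This gives
\[
\sum_{j\in S_q} w_j\|d-d_j\|_2^2=\sum_j w_j\bigl(\|d\|_2^2-2\langle d,d_j\rangle+\|d_j\|_2^2\bigr)=2-2\Bigl\langle d,\sum_j w_jd_j\Bigr\rangle,
\]
using $\sum_j w_j=1$. Minimizing over the unit sphere is therefore equivalent to maximizing $\langle d,s\rangle$ with $s=\sum_j w_jd_j$. By Cauchy--Schwarz, exactly as in the proof of Proposition~\ref{prop:correction_optimality}, $\langle d,s\rangle\le\|s\|_2$, with equality if and only if $d=s/\|s\|_2$ when $s\neq 0$. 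So the minimizer is unique and equals $\bar d_q$. If the $d_j$ are only approximately normalized (because of the $\epsilon$ in $\mathrm{Norm}$), the terms $\|d_j\|_2^2$ are still constants independent of $d$, so the argument is unchanged.

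\emph{Part 2: the activation gate.} Consider the problem
\[
\min_{\alpha\ge 0}\|\hat h_t-\alpha\hat h_q\|_2^2 .
\]
This is a convex quadratic in $\alpha$:
\[
\|\hat h_t\|_2^2-2\alpha\langle\hat h_t,\hat h_q\rangle+\alpha^2\|\hat h_q\|_2^2 .
\]
Without the constraint, the stationary point is $\alpha^\ast=\langle\hat h_t,\hat h_q\rangle/\|\hat h_q\|_2^2=\cos(\hat h_t,\hat h_q)$, since $\|\hat h_q\|_2=\|\hat h_t\|_2=1$. A one-dimensional convex quadratic restricted to $[0,\infty)$ attains its minimum at the projection of the unconstrained minimizer onto that interval. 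Hence the constrained minimizer is $\max(0,\alpha^\ast)=\alpha_t$. This can equally be justified by the KKT conditions: if $\alpha^\ast<0$, the derivative is positive on $[0,\infty)$, so $\alpha=0$ is optimal.

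There is no real obstacle here. The only point needing care is the normalization bookkeeping: $\mathrm{Norm}$ includes $\epsilon$, so $\|\hat h_q\|_2$ is $1$ only up to $\epsilon$. I would state the result for exact normalization, $\epsilon=0$, and note that for general norms the least-squares coefficient becomes $\max(0,\langle\hat h_t,\hat h_q\rangle)/\|\hat h_q\|_2^2$, which agrees with $\alpha_t$ up to $O(\epsilon)$.
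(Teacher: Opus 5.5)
Your proof is correct and follows the paper's argument essentially step for step. For the aggregated direction, both reduce the objective to maximizing $\langle d,\sum_j w_jd_j\rangle$ and apply Cauchy--Schwarz; for the gate, both project the unconstrained least-squares coefficient $\langle\hat h_t,\hat h_q\rangle$ onto the feasible set. The only differences are minor: the paper constrains $\alpha\in[0,1]$ rather than $[0,\infty)$, which gives the same answer because the cosine never exceeds $1$, and it ignores the $\epsilon$ in $\mathrm{Norm}$, which you handle explicitly.
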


\begin{proof}
Expanding the direction-aggregation objective,
\[
\sum_j w_j\|d-d_j\|_2^2
=
\sum_j w_j(\|d\|_2^2+\|d_j\|_2^2-2\langle d,d_j\rangle).
\]
Since $\|d\|_2=1$, $\|d_j\|_2=1$, and $\sum_jw_j=1$, minimizing the objective is equivalent to maximizing
\[
\left\langle
d,
\sum_j w_jd_j
\right\rangle.
\]
By Cauchy--Schwarz, the maximizer is the normalized weighted mean direction.

For the gate, consider the nonnegative least-squares projection
\[
\alpha_t
=
\arg\min_{\alpha\in[0,1]}
\|\hat h_t-\alpha \hat h_q\|_2^2.
\]
Since $\|\hat h_q\|_2=\|\hat h_t\|_2=1$, the unconstrained minimizer is
\[
\alpha_t^\star
=
\langle \hat h_t,\hat h_q\rangle.
\]
Projecting it to $[0,1]$ gives
\[
\alpha_t
=
\max(0,\langle \hat h_t,\hat h_q\rangle)
=
\max(0,\cos(\hat h_t,\hat h_q)).
\]
Thus, the gate is the least-squares coefficient measuring how much the current decoding state lies in the local query direction.
\end{proof}

Combining the least-squares prototype estimator $\bar d_q$ with the least-squares local activation coefficient $\alpha_t$, the lowest-cost rank-one hidden adapter takes the form
\[
h_t'
=
h_t^{\mathrm{out}}
+
\eta\alpha_t\bar d_q.
\]
This update realizes the retrieved historical correction as a single vector addition in hidden space, modulated by local state similarity, and is discarded after generation. 
Therefore, the temporary hidden adapter can be interpreted as the local least-squares projection of multiple retrieved historical correction signals onto a single low-cost inference-time correction field.

\begin{theorem}[Direction coverage bound for old-stage risk]
\label{thm:directional_coverage}
Consider an old-stage example $z$ evaluated by the final frozen backbone. Let
\[
\mathcal L_z^\alpha(\eta,d)
=
\frac{1}{|\mathcal Y_z|}
\sum_{t\in\mathcal Y_z}
\ell_{y_t}
\left(
h_t^{\mathrm{out}}+\eta \alpha_t d
\right),
\]
where $\alpha_t\in[0,1]$ is the inference-time activation gate. Define the gated old-task correction vector
\[
G_z^\alpha
=
\frac{1}{|\mathcal Y_z|}
\sum_{t\in\mathcal Y_z}
\alpha_t W_{\mathrm{LM}}^\top(e_{y_t}-p_t).
\]
Let $b_z=G_z^\alpha/\|G_z^\alpha\|_2$ when $G_z^\alpha\neq 0$. 
Suppose the memory retrieval and aggregation procedure returns a unit direction $\bar d_z$. 
Define the directional coverage error
\[
\varepsilon_{\mathcal B}(z)
=
1-\langle b_z,\bar d_z\rangle.
\]
If the token loss is $\beta$-smooth in hidden space, then
\[
\mathcal L_z^\alpha(\eta,\bar d_z)
\le
\mathcal L_z^\alpha(0,\bar d_z)
-
\eta \|G_z^\alpha\|_2
\bigl(1-\varepsilon_{\mathcal B}(z)\bigr)
+
\frac{\beta\eta^2}{2}.
\]
\end{theorem}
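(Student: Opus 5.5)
The plan is to reduce the statement to the standard descent lemma for a $\beta$-smooth function, applied token by token along the single gated perturbation $\eta\alpha_t\bar d_z$, and then average over $t\in\mathcal Y_z$. Concretely, for each output position $t$ set $\phi_t(\eta)=\ell_{y_t}(h_t^{\mathrm{out}}+\eta\alpha_t\bar d_z)$. By $\beta$-smoothness of $\ell_{y_t}$ in hidden space, the quadratic upper bound holds for $u=h_t^{\mathrm{out}}$ and $v=u+\eta\alpha_t\bar d_z$:
\[
\ell_{y_t}(v)\le \ell_{y_t}(u)+\langle \nabla_h\ell_{y_t}(u),v-u\rangle+\tfrac{\beta}{2}\|v-u\|_2^2 .
\]
Since $v-u=\eta\alpha_t\bar d_z$, with $\|\bar d_z\|_2=1$ and $\alpha_t\in[0,1]$, the quadratic term is at most $\tfrac{\beta}{2}\eta^2\alpha_t^2\le\tfrac{\beta\eta^2}{2}$.

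Next I will identify the linear term using the gradient formula from the Notation paragraph, $\nabla_h\ell_y(h)=W_{\mathrm{LM}}^\top(p-e_y)$. This gives $\langle\nabla_h\ell_{y_t}(h_t^{\mathrm{out}}),\eta\alpha_t\bar d_z\rangle=-\eta\langle\alpha_tW_{\mathrm{LM}}^\top(e_{y_t}-p_t),\bar d_z\rangle$. Averaging the per-token inequalities over $\mathcal Y_z$ then yields
\[
\mathcal L_z^\alpha(\eta,\bar d_z)\le\mathcal L_z^\alpha(0,\bar d_z)-\eta\langle G_z^\alpha,\bar d_z\rangle+\tfrac{\beta\eta^2}{2}.
\]
The quadratic term survives averaging with constant $\beta$ because each token contributes at most $\beta\eta^2/2$ and the weights $1/|\mathcal Y_z|$ sum to one. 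This is exactly the computation of Proposition~\ref{prop:correction_optimality}, upgraded from a Taylor remainder $o(\eta)$ to an explicit smoothness bound, with the gate folded into the correction vector.

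Finally, when $G_z^\alpha\neq0$, I will write $\langle G_z^\alpha,\bar d_z\rangle=\|G_z^\alpha\|_2\langle b_z,\bar d_z\rangle=\|G_z^\alpha\|_2(1-\varepsilon_{\mathcal B}(z))$ by the definition of the coverage error, which gives the claimed inequality. When $G_z^\alpha=0$, the linear term vanishes, so the bound holds with the middle term read as zero for any convention on $b_z$; I will note this as the degenerate case.

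The step needing most care is the smoothness hypothesis. I must ensure it is used as a global quadratic upper bound along the segment $[u,v]$, which requires $\beta$-Lipschitz gradients on that segment and not merely twice-differentiability. I will also make sure the gate enters only through $\alpha_t^2\le1$, so it does not inflate the constant. If the per-token $\beta$ were instead only assumed for the averaged loss, I would apply the descent lemma directly to $\eta\mapsto\mathcal L_z^\alpha(\eta,\bar d_z)$ and bound its second derivative by $\beta\,\frac{1}{|\mathcal Y_z|}\sum_t\alpha_t^2\le\beta$. One could also remark that softmax cross-entropy does satisfy this with $\beta\le\|W_{\mathrm{LM}}\|_2^2/2$, but the theorem takes $\beta$ as an assumption, so this is not needed.
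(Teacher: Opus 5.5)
Your proposal is correct and follows the paper's proof: both reduce the claim to the descent lemma along the gated perturbation and identify the linear term as $-\eta\langle G_z^\alpha,\bar d_z\rangle=-\eta\|G_z^\alpha\|_2\bigl(1-\varepsilon_{\mathcal B}(z)\bigr)$. The only difference is that you apply the quadratic bound token by token and then average. This makes explicit a step the paper leaves implicit when it applies $\beta$-smoothness directly to $\phi_z(\eta)=\mathcal L_z^\alpha(\eta,\bar d_z)$: the facts $\alpha_t^2\le 1$ and $\|\bar d_z\|_2=1$ are what keep the curvature constant at $\beta$.
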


\begin{proof}
For fixed $z$ and $\bar d_z$, define
\[
\phi_z(\eta)=\mathcal L_z^\alpha(\eta,\bar d_z).
\]
Its derivative at $\eta=0$ is
\[
\phi_z'(0)
=
\frac{1}{|\mathcal Y_z|}
\sum_{t\in\mathcal Y_z}
\alpha_t
\left\langle
\nabla_h \ell_{y_t}(h_t^{\mathrm{out}}),
\bar d_z
\right\rangle.
\]
Using
\[
\nabla_h \ell_{y_t}(h_t^{\mathrm{out}})
=
W_{\mathrm{LM}}^\top(p_t-e_{y_t}),
\]
we get
\[
\phi_z'(0)
=
-\left\langle
G_z^\alpha,\bar d_z
\right\rangle
=
-\|G_z^\alpha\|_2
\langle b_z,\bar d_z\rangle.
\]
Since $\langle b_z,\bar d_z\rangle=1-\varepsilon_{\mathcal B}(z)$, and by $\beta$-smoothness,
\[
\phi_z(\eta)
\le
\phi_z(0)+\eta\phi_z'(0)+\frac{\beta\eta^2}{2},
\]
which gives the desired result.
\end{proof}

Let $\mathcal P_{\mathrm{old}}$ be the distribution over previously seen examples. Denote the frozen final old-task risk by
\[
\mathcal R_{\mathrm{old}}^{0}
=
\mathbb E_{z\sim\mathcal P_{\mathrm{old}}}
[
\mathcal L_z^\alpha(0,\bar d_z)
],
\]
and the adapted risk by
\[
\mathcal R_{\mathrm{old}}^{\mathcal B}(\eta)
=
\mathbb E_{z\sim\mathcal P_{\mathrm{old}}}
[
\mathcal L_z^\alpha(\eta,\bar d_z)
].
\]
Then
\[
\mathcal R_{\mathrm{old}}^{\mathcal B}(\eta)
\le
\mathcal R_{\mathrm{old}}^{0}
-
\eta
\mathbb E_z
\left[
\|G_z^\alpha\|_2
\langle b_z,\bar d_z\rangle
\right]
+
\frac{\beta\eta^2}{2}.
\]
Thus, the benefit of memory is controlled by the directional coverage term
\[
\mathrm{Cov}(\mathcal B)
=
\mathbb E_z
\left[
\|G_z^\alpha\|_2
\langle b_z,\bar d_z\rangle
\right].
\]
If memory preserves old-stage directions with positive alignment, then inference-time adaptation reduces old-stage risk to first order. Relative to any reference old-stage risk $\mathcal R_{\mathrm{old}}^{\mathrm{ref}}$, the forgetting gap satisfies
\[
\mathcal R_{\mathrm{old}}^{\mathcal B}(\eta)
-
\mathcal R_{\mathrm{old}}^{\mathrm{ref}}
\le
\mathcal R_{\mathrm{old}}^{0}
-
\mathcal R_{\mathrm{old}}^{\mathrm{ref}}
-
\eta \mathrm{Cov}(\mathcal B)
+
\frac{\beta\eta^2}{2}.
\]

\section{Extended Methodology}
\label{app:method}

\subsection{Inference-Time Efficiency}
\label{app:inference_efficiency}

At inference time, \sysname{} adds only retrieval, reranking, and temporary hidden-state adaptation on top of standard generation. 
Exact semantic and directional recall cost $O(AH)$ and $O(JH)$, respectively, where $A=\sum_j|\mathcal A_j|$ is the number of anchors and $J$ is the number of prototypes. 
Both are bounded by the fixed memory budget and can be further accelerated with approximate indexing. 
The uncertainty proxy and reranking cost $O(K_vH)$ and $O(|\mathcal C_q|K_vH)$, respectively, where $K_v$ is the restricted vocabulary size and $\mathcal C_q$ is the candidate set. 
With fixed retrieval hyperparameters and bounded memory, the additional query-level overhead is linear in the hidden dimension $H$. 
During generation, the adapter introduces only a cosine gate and a vector addition per token, which also cost $O(H)$. 
Therefore, \sysname{} avoids prompt extension, replay, parameter updates, and test-time backpropagation, making it compatible with the low-storage and low-latency constraints of SLM deployment.

\subsection{Detailed Implementation of \sysname{}}
\label{app:miita_algorithm}

Algorithm~\ref{alg:miita} summarizes the full implementation of \sysname{}, including memory construction, budget-aware memory repair, inference-time retrieval, and temporary hidden-state adaptation.

\begin{algorithm}[t]
\caption{\sysname{}: Memory-Induced Inference-Time
Adaptation Framework for CL}
\label{alg:miita}
\begin{algorithmic}[1]
\Require Stream $\{\mathcal D_t\}_{t=1}^{T}$, base SLM $f$, budget $C_{\mathrm{mem}}$, hyperparameters $\{\tau_d,\tau_k,R_k,R_d,M,\lambda_k,\lambda_u,T_{\mathrm{agg}},\eta\}$
\Ensure Memory bank $\mathcal B$, prediction $\hat y$

\State $\mathcal B\leftarrow\varnothing$

\Statex
\Statex \textit{// Phase 1: Build budgeted direction memory}
\For{$t=1$ to $T$}
    \For{each $(x_m,y_m)\in\mathcal D_t$}
        \State $k_m \leftarrow$ Eq.~\eqref{eq:semantic_key}, \quad $d_m \leftarrow$ Eq.~\eqref{eq:correction_direction}
        \If{$\mathcal B=\varnothing$}
            \State Add $u=(d_m,\{k_m\},1)$ to $\mathcal B$
        \Else
            \State $j^\star \leftarrow \arg\max_j \cos(d_m,d_j)$
            \If{$\cos(d_m,d_{j^\star})<\tau_d$}
                \State Add $u=(d_m,\{k_m\},1)$ to $\mathcal B$
            \Else
                \State Update $u_{j^\star}$ by Eqs.~\eqref{eq:prototype_update} and~\eqref{eq:anchor_update}
            \EndIf
        \EndIf
        \While{$\mathrm{Size}(\mathcal B)>C_{\mathrm{mem}}$}
            \State Repair $\mathcal B$ by anchor pruning, prototype merging, or eviction via Eq.~\eqref{eq:prototype_value}
        \EndWhile
    \EndFor
\EndFor

\Statex
\Statex \textit{// Phase 2: Retrieve and aggregate memory directions}
\State Given query $q$, compute $k_q$ by Eq.~\eqref{eq:semantic_key} and $\tilde d_q$ by Eq.~\eqref{eq:uncertainty_direction}
\State Construct $\mathcal C_q$ by Eq.~\eqref{eq:candidate_set}
\For{each $j\in\mathcal C_q$}
    \State Compute $\mathrm{score}(q,j)$ by Eq.~\eqref{eq:retrieval_score}
\EndFor
\State $S_q \leftarrow \operatorname{Top}_{M}\{\mathrm{score}(q,j):j\in\mathcal C_q\}$
\State Aggregate $\bar d_q$ by Eq.~\eqref{eq:direction_aggregation}

\Statex
\Statex \textit{// Phase 3: Apply temporary hidden adaptation}
\For{each decoding step $s$}
    \State Apply gated update $h_s'$ by Eq.~\eqref{eq:hidden_adaptation}
    \State Generate next token from $h_s'$
\EndFor
\State Discard the temporary update
\State \Return $\hat y$
\end{algorithmic}
\end{algorithm}

\section{Experimental Details}
\subsection{Baseline Details}
\label{app:baselines}

\textbf{General protocol.}
All baselines are evaluated under the same budget-constrained supervised SLM CL setting. 
Unless otherwise specified, each benchmark is converted into the unified instruction-style format $(x,y)$, and all methods use the same SLM backbone. 
For methods that store auxiliary information, including replay samples, summaries, statistics, episodic memories, adapters, or bottleneck parameters, we constrain the total persistent storage to the same budget $C_{\mathrm{bytes}}$. 
Methods designed only for classification are evaluated on the classification benchmarks and are marked as not applicable to open-ended QA.

\textbf{Baseline descriptions and adaptations.}
\begin{itemize}[leftmargin=*, itemsep=0pt, topsep=0pt]
    \item \textbf{LoRA}~\cite{LoRA}. 
    LoRA is a parameter-efficient sequential tuning baseline that updates low-rank adapters while keeping the SLM backbone frozen. 
    We train one LoRA module sequentially on each incoming stage using only the current data $\mathcal{D}_t$, without replay or external memory. 
    The LoRA parameters are carried across stages as the model's continual update state.

    \item \textbf{MBPA++}~\cite{MBPA}. 
    MBPA++ is an episodic-memory language-model CL method that stores past examples and uses sparse replay together with inference-time local adaptation. 
    We replace its original language model with the same SLM backbone used by our method, store serialized $(x,y)$ examples in episodic memory under $C_{\mathrm{bytes}}$, retrieve nearest neighbors using frozen SLM representations, and perform temporary local LoRA adaptation for each query before discarding the update.

    \item \textbf{CT0}~\cite{CT0}. 
    CT0 is a training-time rehearsal baseline that continually learns new instruction-following tasks by replaying a small buffer of previous examples. 
    We adapt it by maintaining a budgeted replay buffer of old supervised examples and mixing sampled replay data with the current stage data during LoRA-based continual tuning.

    \item \textbf{FOREVER}~\cite{FOREVER}. 
    FOREVER is a replay-based LLM CL method that schedules memory replay according to model-centric update dynamics. 
    We use the same budgeted replay buffer as CT0, but trigger replay according to FOREVER's model-time schedule computed from accumulated LoRA update norms, and apply its replay-time regularization during continual tuning.

    \item \textbf{CIS}~\cite{CIS}. 
    CIS is the concrete method under the CLOB paradigm, which performs black-box CL by storing class knowledge as incrementally updated textual summaries. 
    For classification benchmarks, we follow its original design by generating and updating class-level summaries under $C_{\mathrm{bytes}}$ and using them in inference-time prompts. 
    For QA benchmarks, where the output space is open-ended rather than class-based, we adapt CIS by maintaining stage-level or domain-level summaries of previously seen QA examples. 
    At inference time, the query is paired with the relevant stored summaries in the prompt, and the SLM generates the answer directly.

    \item \textbf{InCA}~\cite{InCA}. 
    InCA is an inference-time ICL method that uses an external continual learner to select candidate classes before prompting the LLM. 
    For classification benchmarks, we keep this mechanism and incrementally store class-level tag statistics and summaries under $C_{\mathrm{bytes}}$. 
    For QA benchmarks, we adapt the external learner from class selection to memory selection: the learner ranks stored stage/domain summaries or representative QA memories according to the semantic tags of the query, and the top-ranked memories are used to construct the inference prompt. 
    The final prediction is generated by the SLM rather than selected from a finite class set.

    \item \textbf{DKVB}~\cite{DKVB}. 
    DKVB is an SLM-oriented CL method that uses a discrete key-value bottleneck to enable localized continual updates. 
    Since the original method is designed for encoder-only classification, we adapt it by inserting the bottleneck at the SLM hidden representation before the prediction head. 
    For instruction-style tasks, labels are verbalized as output tokens, and the bottleneck capacity is selected to fit the same storage budget.

    \item \textbf{Srinath K \textit{et al.}}~\cite{POS}. 
    Srinath K \textit{et al.} propose an SLM-oriented multilingual CL method for mitigating cross-lingual forgetting with replay adapters. 
    For multilingual QA, we treat each language as a domain and use its replay-adapter strategy in the original language-incremental form. 
    For non-multilingual benchmarks, we adapt the method by retaining the shared replay-adapter mechanism but replacing POS-guided code-switching with task- or domain-preserving replay examples from the same benchmark. 
    Thus, the baseline becomes a replay-adapter CL method that uses the same instruction-style input-output format while removing the language-specific code-switching component when it is not applicable.
\end{itemize}

\subsection{Details of Memory Item Variants}
\label{app:memory_item_variants}

We evaluate four memory item formats under the same retrieval backbone, retrieval budget, and total storage constraint. 

\begin{itemize}[leftmargin=*, itemsep=0pt, topsep=0pt]
    \item \textbf{ExampleMem.}
    This variant stores serialized supervised examples $(x,y)$ as memory items. 
    At inference time, retrieved examples are inserted into the prompt as demonstrations or contextual evidence. 
    The storage cost is measured by the byte size of the stored text.

    \item \textbf{SummaryMem.}
    This variant compresses historical examples into natural-language summaries. 
    Retrieved summaries are inserted into the prompt before generation. 
    The storage cost includes the byte size of the stored summaries.

    \item \textbf{SemanticMem.}
    This variant stores semantic embedding prototypes obtained from the same retrieval backbone. 
    At inference time, retrieved semantic prototypes are used through the same retrieval interface, but they do not contain supervised correction information. 
    The storage cost includes the stored embedding vectors and associated prototype metadata.

    \item \textbf{DirectionMem.}
    This is the memory format used by \sysname{}. 
    Each supervised example induces a correction direction in hidden space, and similar directions are merged into prototypes under the memory budget. 
    At inference time, retrieved direction prototypes are used for temporary hidden-state adaptation without gradient-based test-time training.
\end{itemize}

\subsection{Details of Retrieval Variants}
\label{app:retrieval_variants}

We evaluate retrieval variants under the same DirectionMem memory bank, retrieval backbone, retrieval budget, and total storage constraint. 
All variants use the same supervised stream and Qwen3-0.6B backbone. 
They differ only in how memory prototypes are selected or reranked at inference time.

\begin{itemize}[leftmargin=*, itemsep=0pt, topsep=0pt]
    \item \textbf{RandomRet.}
    This variant uniformly samples memory prototypes from the DirectionMem bank. 
    It ignores both semantic relevance and uncertainty information and serves as a lower-bound retrieval baseline.

    \item \textbf{SemanticRet.}
    This variant retrieves memory prototypes using only semantic-key similarity $S_k(q,j)$ between the query and stored memory anchors. 
    It tests whether semantic relevance alone is sufficient for selecting useful historical correction directions.

    \item \textbf{UncertaintyRet.}
    This variant retrieves memory prototypes using only the uncertainty-induced query direction $\tilde d_q$. 
    Since $\tilde d_q$ and the stored correction directions $d_j$ are both defined in the LM-head input hidden space, they can be compared by directional alignment. 
    This variant tests whether uncertainty provides a label-free query-side signal for selecting historical directions that may locally affect the current prediction distribution.

    \item \textbf{SemUncRet.}
    This is the retrieval strategy used by \sysname{}. 
    It first applies semantic recall to construct a query-relevant candidate memory set and then applies uncertainty-guided soft reranking within this set. 
    Semantic recall keeps retrieved memories related to the query, while uncertainty prioritizes directions that are more likely to influence the current predictive state.

    \item \textbf{OracleRet.}
    This variant uses the ground-truth target to compute an oracle correction direction and retrieves memory prototypes according to their alignment with this direction. 
    For a single target token, the oracle direction is computed as $d_q^\star=W_{\mathrm{LM}}^\top(e_y-p)$, where $p$ is the predicted distribution and $e_y$ is the one-hot target distribution. 
    For generation tasks, token-level oracle directions are averaged over the target sequence. 
    OracleRet is used only as an upper bound and is not available at inference time.
\end{itemize}

\paragraph{Oracle Alignment.}
For retrieval analysis, we compute \textbf{Oracle Alignment (OA)} on a held-out evaluation set where labels are available. 
For each query $q$, we first compute the oracle correction direction $d_q^\star$ from the ground-truth target. 
Each retrieval variant then produces an aggregated retrieved direction $\bar d_q$ using the same direction aggregation rule as \sysname{}. 
OA is defined as the average cosine similarity between $\bar d_q$ and $d_q^\star$:
\[
\mathrm{OA}
=
\frac{1}{|\mathcal Q|}
\sum_{q\in\mathcal Q}
\cos(\bar d_q,d_q^\star).
\]
A higher OA means that the retrieved direction has stronger alignment with the label-derived oracle correction direction.
OA should not be interpreted as accuracy or as the fraction of queries whose predictions become correct. 
It measures directional agreement in hidden space. 
In high-dimensional spaces, random directions have cosine similarity close to zero, so an OA around $0.5$ already indicates a strong positive projection onto the oracle correction direction. 
A retrieved direction does not need to be perfectly collinear with the oracle direction to be useful: as long as it has a positive projection on the oracle direction, a small hidden-state update can still increase the probability of target tokens or reduce the supervised loss. 
Conversely, OA close to $1$ would mean near-perfect recovery of the label-derived correction direction, which is unrealistic for a label-free retrieval signal in generation tasks. 
Even OracleRet does not obtain OA equal to $1$ because it still retrieves and aggregates finite historical memory prototypes rather than directly applying the oracle direction itself. 
Thus, OA is used only as a diagnostic measure of retrieval-direction quality, while OP and BWT measure the final continual learning performance.

\section{Supplementary Experiments}
\label{app:supplementary_experiments}

\subsection{Effect of the Activation Gate}
\label{app:gate_ablation}

We additionally evaluate the activation gate $\alpha_t$ used in the temporary hidden adaptation step. 
The experiment is conducted on \textsc{SuperNI} with Qwen3-0.6B under the fixed 1\% memory budget. 
The full \sysname{} uses the gated update
$h_t'=h_t^{\mathrm{out}}+\eta\alpha_t\bar d_q$, while the ablated variant removes the gate by setting $\alpha_t=1$ for all decoding steps, so that the same correction direction is uniformly applied throughout generation.

Removing the gate reduces performance from $40.58$ OP / $-4.89$ BWT to $38.72$ OP / $-6.18$ BWT. 
This confirms that the gate is not a redundant component: uniformly injecting the retrieved direction can over-correct decoding states that are no longer close to the query state that triggered retrieval. 
In contrast, the full gated design applies stronger correction only when the current decoding state remains locally aligned with the query state. 
This is consistent with Proposition~\ref{prop:hidden_adapter_lsq}, which interprets $\alpha_t$ as a nonnegative least-squares projection coefficient measuring how much the current decoding state lies in the local query direction.

\subsection{Hyperparameter Sensitivity}
\label{app:hyperparameter_sensitivity}

We further study the sensitivity of \sysname{} to the two memory-construction thresholds, $\tau_d$ and $\tau_k$, on \textsc{SuperNI} with Qwen3-0.6B under the fixed 1\% memory budget. 
When sweeping one hyperparameter, all others are fixed to the default configuration. 
The default setting used in the main experiments is $\tau_d=0.85$ and $\tau_k=0.80$.

\begin{table}[t]
\centering
\small
\setlength{\tabcolsep}{6pt}
\caption{
Hyperparameter sensitivity on \textsc{SuperNI} with Qwen3-0.6B under a fixed 1\% memory budget. 
When sweeping one threshold, all other hyperparameters are fixed to the default configuration. 
\textbf{Bold} indicates the default setting used in the main experiments.
}
\label{tab:hyperparameter_sensitivity}
\begin{tabular}{lccc}
\toprule
\textbf{Hyperparameter} & \textbf{Value} & OP$\uparrow$ & BWT$\uparrow$ \\
\midrule
\multirow{5}{*}{$\tau_d$}
& 0.75 & 39.82 & -5.31 \\
& 0.80 & 40.21 & -5.08 \\
& \textbf{0.85} & \textbf{40.58} & \textbf{-4.89} \\
& 0.90 & 40.34 & -5.02 \\
& 0.95 & 39.76 & -5.36 \\
\midrule
\multirow{5}{*}{$\tau_k$}
& 0.70 & 40.05 & -5.17 \\
& 0.75 & 40.31 & -5.02 \\
& \textbf{0.80} & \textbf{40.58} & \textbf{-4.89} \\
& 0.85 & 40.39 & -4.97 \\
& 0.90 & 40.12 & -5.15 \\
\bottomrule
\end{tabular}
\end{table}

Table~\ref{tab:hyperparameter_sensitivity} shows that \sysname{} is robust across a reasonable range of threshold choices. 
For $\tau_d$, overly small values merge directions too aggressively and may blur distinct functional corrections, while overly large values create more fragmented prototypes and reduce the benefit of direction sharing. 
The default value $\tau_d=0.85$ provides the best balance between compression and functional diversity. 
For $\tau_k$, lower values keep too few semantic anchors and may weaken retrieval access, while higher values preserve more anchors but consume budget with redundant semantic entry points. 
The default value $\tau_k=0.80$ achieves the best overall trade-off. 
Across all tested values, performance remains close to the default setting, indicating that the gains of \sysname{} are not due to a fragile hyperparameter choice.

\section{Limitations}
\label{app:limitations}

MIITA is designed for supervised continual learning with small language models under a fixed persistent-memory budget, and its conclusions should be interpreted within this setting. The method assumes that each incoming stage provides input-output supervision from which correction directions can be extracted, and it does not directly cover unsupervised, reinforcement-learning, multimodal, or tool-use continual adaptation. Because the backbone remains frozen and adaptation is applied only as a temporary hidden-state perturbation, MIITA cannot recover capabilities that are absent from the base model or fully replace parameter updates when new tasks require substantial new reasoning skills or knowledge that is poorly represented in the existing hidden space.

The effectiveness of MIITA also depends on the quality and coverage of the stored direction prototypes. If historical examples induce highly diverse or conflicting correction directions under a very small memory budget, memory repair may merge or evict directions that are useful for rare tasks. Similarly, retrieval relies on semantic anchors and an uncertainty-based label-free proxy, so out-of-domain queries, ambiguous prompts, poorly calibrated predictive uncertainty, or weak semantic representations may retrieve suboptimal directions and reduce the benefit of inference-time adaptation. The theoretical analysis is local: it relies on differentiability, first-order approximations around the current hidden state, fixed LM-head geometry, and smoothness assumptions, and therefore does not guarantee global performance improvements for large adaptation scales or across arbitrary distribution shifts.

Although MIITA avoids prompt extension, test-time backpropagation, and persistent backbone updates, it still introduces additional inference-time computation for memory retrieval, reranking, and per-token hidden-state gating. Exact retrieval scales with the number of stored anchors and prototypes, so very long streams or larger memory budgets may require approximate indexing or further compression to maintain low latency. Our experiments cover multiple supervised CL benchmarks, task types, model families, model scales, and memory budgets, but they do not exhaustively evaluate very long-horizon streams, safety-critical deployments, non-English domains beyond the tested multilingual QA setting, or all possible SLM architectures. Finally, storing compact correction directions reduces the amount of raw text retained but should not be interpreted as a formal privacy guarantee; if the training stream contains sensitive or biased supervision, additional auditing, privacy protection, and fairness evaluation would be needed before deployment.

\end{document}